\documentclass{article}

\usepackage{microtype}
\usepackage{graphicx}
\usepackage{subcaption}
\usepackage{booktabs} % for professional tables

\usepackage{hyperref}

\usepackage[utf8]{inputenc} % allow utf-8 input
\usepackage[T1]{fontenc}    % use 8-bit T1 fonts
\usepackage[accepted]{icml2026}

\usepackage{amsmath}
\usepackage{amssymb}
\usepackage{mathtools}
\usepackage{amsthm}
\usepackage{algorithm}
\usepackage{algorithmic}
\usepackage{mathrsfs}
\usepackage{multirow}

\usepackage{pgfplots}
\pgfplotsset{compat=1.17}
\usepackage{xcolor}

\definecolor{mygreen}{HTML}{228B22} 
\definecolor{myblue}{HTML}{0000FF}
\usepackage[capitalize,noabbrev]{cleveref}

\theoremstyle{plain}
\newtheorem{theorem}{Theorem}[section]

\newtheorem{lemma}[theorem]{Lemma}

\theoremstyle{definition}

\theoremstyle{remark}

\usepackage[textsize=tiny]{todonotes}

\icmltitlerunning{Inconsistency-Aware Minimization: Improving Generalization with Unlabeled Data}

\begin{document}

\twocolumn[
  \icmltitle{\texorpdfstring{Inconsistency-Aware Minimization:\\ Improving Generalization with Unlabeled Data}{Inconsistency-Aware Minimization: Improving Generalization with Unlabeled Data}}

  % It is OKAY to include author information, even for blind submissions: the
  % style file will automatically remove it for you unless you've provided
  % the [accepted] option to the icml2026 package.

  % List of affiliations: The first argument should be a (short) identifier you
  % will use later to specify author affiliations Academic affiliations
  % should list Department, University, City, Region, Country Industry
  % affiliations should list Company, City, Region, Country

  % You can specify symbols, otherwise they are numbered in order. Ideally, you
  % should not use this facility. Affiliations will be numbered in order of
  % appearance and this is the preferred way.
  \icmlsetsymbol{equal}{*}

  \begin{icmlauthorlist}
        \icmlauthor{Hee-Sung Kim}{yyy}
        \icmlauthor{Hyeonseong Kim}{yyy,xxx}
        \icmlauthor{Sungyoon Lee}{yyy}
        % \icmlauthor{Firstname3 Lastname3}{comp}
        % \icmlauthor{Firstname4 Lastname4}{sch}
        % \icmlauthor{Firstname5 Lastname5}{yyy}
        % \icmlauthor{Firstname6 Lastname6}{sch,yyy,comp}
        % \icmlauthor{Firstname7 Lastname7}{comp}
        % %\icmlauthor{}{sch}
        % \icmlauthor{Firstname8 Lastname8}{sch}
        % \icmlauthor{Firstname8 Lastname8}{yyy,comp}
        % %\icmlauthor{}{sch}
        % %\icmlauthor{}{sch}
    \end{icmlauthorlist}
    
    \icmlaffiliation{yyy}{Department of Computer Science, Hanyang University, Seoul, Korea}
    \icmlaffiliation{xxx}{Agency for Defense Development, Daejeon, Korea}
    \icmlcorrespondingauthor{Sungyoon Lee}{sungyoonlee@hanyang.ac.kr}

  % You may provide any keywords that you find helpful for describing your
  % paper; these are used to populate the "keywords" metadata in the PDF but
  % will not be shown in the document
  \icmlkeywords{Machine Learning, ICML}

  \vskip 0.3in
]

% this must go after the closing bracket ] following \twocolumn[ ...

% This command actually creates the footnote in the first column listing the
% affiliations and the copyright notice. The command takes one argument, which
% is text to display at the start of the footnote. The \icmlEqualContribution
% command is standard text for equal contribution. Remove it (just {}) if you
% do not need this facility.

% Use ONE of the following lines. DO NOT remove the command.
% If you have no special notice, KEEP empty braces:
\printAffiliationsAndNotice{}  % no special notice (required even if empty)
% Or, if applicable, use the standard equal contribution text:
% \printAffiliationsAndNotice{\icmlEqualContribution}

\begin{abstract}
  Estimating the generalization gap and developing optimization methods that improve generalization are crucial for deep learning models, for both theoretical understanding and practical applications.
Leveraging unlabeled data for these purposes offers significant advantages in real-world scenarios.
This paper introduces a novel generalization measure, \textit{local inconsistency}, derived from an information-geometric perspective on the parameter space of neural networks. A key feature of local inconsistency is that it can be computed without explicit labels.
We establish theoretical underpinnings by connecting local inconsistency to the Fisher information matrix and the loss Hessian.
Empirically, we demonstrate that local inconsistency correlates with the generalization gap.
Based on these findings, we propose Inconsistency-Aware Minimization (IAM), which incorporates local inconsistency into the training objective.
We demonstrate that in standard supervised learning settings, IAM enhances generalization, achieving performance comparable to that of existing methods such as Sharpness-Aware Minimization.
Furthermore, IAM exhibits efficacy in semi- and self-supervised learning scenarios, where the local inconsistency is computed from unlabeled data.
\end{abstract}

\section{Introduction}
Estimating the generalization gap and optimizing models to perform well on unseen data are central challenges in deep learning.
Prior work has linked the flatness of the loss landscape to generalization and proposed sharpness-driven optimizers; however, sharpness—often instantiated as the largest eigenvalue of the loss Hessian—does not by itself reliably predict the generalization gap across settings \citep{keskar2017largebatchtrainingdeeplearning, pmlr-v70-dinh17b, NEURIPS2018_a41b3bb3, NEURIPS2018_be3087e7, foret2021SAM, pmlr-v139-kwon21b, kim2022fishersam, zhuang2022surrogategapminimizationimproves, andriushchenko2023modernlookrelationshipsharpness}.

Alternatively, some studies examine output-based measures such as \emph{disagreement} \citep{jiang2022assessinggeneralizationsgddisagreement} and \emph{inconsistency} \citep{johnson2023inconsistencyinstabilitygeneralizationgap}, which can correlate with the generalization gap under certain conditions.
However, disagreement is non-differentiable and resists direct integration into training.
Inconsistency is also impractical for a single model, since it requires aggregating outputs across multiple models and data splits at high computational cost.  

In this work, we introduce \emph{local inconsistency}, an information-geometric measure of output sensitivity in parameter space. Concretely, local inconsistency is defined as the worst-case (within an $\ell_2$ ball) KL divergence between the output distributions of a model and its perturbed counterpart. Crucially, it is (i) \textbf{computable from a single trained model} and (ii) \textbf{differentiable}, enabling both estimation and \emph{direct regularization} within standard training pipelines. Furthermore, its computation (iii) \textbf{relies only on unlabeled data}, a key property that unlocks applications in label-constrained settings, including semi-/self-supervised learning.

We theoretically ground local inconsistency by connecting it to the Fisher information matrix (FIM) and the loss Hessian, showing that, under a local quadratic approximation, it is governed by the FIM's largest eigenvalue.
This provides a complementary signal to traditional sharpness (e.g., $\lambda_{\max}(H)$), as we find that local inconsistency maintains a meaningful correlation with the generalization gap even in settings where sharpness measures falter.

Building on this, we propose \emph{Inconsistency-Aware Minimization} (IAM)\footnote{Code is available at \url{https://github.com/heesung-k/IAM}.}, which incorporates local inconsistency into the training objective. 
IAM inherits the practical advantages of single-model training while uniquely enabling \emph{regularization from unlabeled data}.
On CIFAR-10/100 supervised benchmarks, IAM matches or surpasses sharpness-aware baselines. 
Crucially, its label-agnostic nature makes it a versatile regularizer for other learning paradigms; we show it boosts the performance of both the semi-supervised framework FixMatch and the self-supervised method SimCLR, demonstrating its broad applicability.

\vspace{-0.2em}
\begin{itemize}
    \item \textbf{A computable and differentiable measure from unlabeled data.}
    We introduce \emph{local inconsistency}, an information-geometric generalization measure that is \emph{model-intrinsic} %computable from a single trained model 
    and \emph{label-free}, making it practical both to estimate and to \emph{regularize} during training. 
    \item \textbf{Theory: links to FIM/Hessian and to prior inconsistency.}
    We formalize connections from \emph{local inconsistency} to the FIM (and via Gauss–Newton to the Hessian) and discuss a relationship to \citet{johnson2023inconsistencyinstabilitygeneralizationgap}, clarifying how local inconsistency complements inconsistency while avoiding the multi-model costs of prior inconsistency measures.
    \item \textbf{Method: IAM for labeled, semi-/self-supervised learning.}
    We develop IAM, which incorporates local inconsistency into the training objective. IAM achieves competitive or superior generalization to SAM in supervised tasks and, uniquely, \emph{leverages unlabeled data} to improve semi- and self-supervised training.
\end{itemize}

\section{Related Work}

Understanding and improving generalization in deep neural networks, especially given their large capacity and tendency to overfit \citep{zhang2017understandingdeeplearningrequires}, remains a central challenge. While networks can memorize random labels \citep{zhang2017understandingdeeplearningrequires} and learn simple patterns before noise \citep{arpit2017closerlookmemorizationdeep}, phenomena like double descent \citep{Nakkiran_2021} and the inadequacy of uniform convergence theory \citep{NEURIPS2019_05e97c20} highlight the need for novel generalization measures beyond loss-based metrics.

Traditional measures like VC-dimension often fall short. While spectrally-normalized margin bounds \citep{NIPS2017_b22b257a} and PAC-Bayes approaches offer insights, no single measure consistently predicts generalization \citep{jiang2019fantasticgeneralizationmeasures}. Recently, disagreement \citep{jiang2022assessinggeneralizationsgddisagreement} and inconsistency \citep{johnson2023inconsistencyinstabilitygeneralizationgap} have shown promise, correlating well with the generalization gap, even when computed on unlabeled data. However, their reliance on training multiple models poses practical limitations for direct optimization in a single-model setup, underscoring the need for efficient, label-free, single-model generalization measures.

The geometry of the loss landscape, particularly the flatness of minima, has been extensively linked to generalization \citep{keskar2017largebatchtrainingdeeplearning, NEURIPS2018_a41b3bb3}. 
However, the utility of sharpness as a sole predictor is debated due to issues like scale invariance \citep{pmlr-v70-dinh17b} and its correlation with training hyperparameters rather than true generalization \citep{andriushchenko2023modernlookrelationshipsharpness}. 
Indeed, some studies suggest that output inconsistency and instability can be more reliable predictors than sharpness \citep{johnson2023inconsistencyinstabilitygeneralizationgap}. 
Information geometry has inspired a reparametrization-invariant sharpness measure \citep{jang2022reparametrization}, but these can be computationally expensive. This context motivates our exploration of ``local inconsistency'', an alternative geometric measure focusing on output sensitivity within a parameter neighborhood, computable from unlabeled data using a single model.

Various regularization techniques, both explicit (e.g., dropout \citep{10.5555/2627435.2670313}, batch normalization \citep{NEURIPS2018_905056c1}, Mixup \citep{zhang2018mixupempiricalriskminimization}) and implicit (e.g., SGD's bias \citep{pmlr-v48-hardt16, JMLR:v19:18-188}), aim to improve generalization. Methods like Sharpness-Aware Minimization (SAM, \citep{foret2021SAM}) and ASAM \citep{pmlr-v139-kwon21b} directly optimize for flat minima and have shown significant improvements. Despite their success, the precise role of sharpness in generalization remains an active area of research \citep{jiang2019fantasticgeneralizationmeasures, andriushchenko2023modernlookrelationshipsharpness}, further motivating the development of complementary approaches like our proposed IAM.

\section{Background and Preliminaries}
\label{sec:background}

In this section, we briefly review fundamental concepts and notations essential for understanding our proposed metric and its theoretical connections.
We focus on probabilistic classification models, information geometry, and aspects of the loss landscape.

\subsection{Notation and Problem Setup}
We consider probabilistic classification models. Let $x \in \mathcal{X}$ be a data point from the input space $\mathcal{X}$, and $y \in [C] = \{0, 1, \dots, C-1\}$ be the corresponding class label, where $C$ is the total number of classes. The data pair $(x, y)$ is assumed to be drawn from an underlying distribution $\mathscr{D}$ over $\mathcal{X} \times [C]$.
A model, parameterized by $\theta \in \mathbb{R}^m$, outputs a probability distribution over classes for a given input $x$. This is typically achieved by transforming a logit vector $z(x; \theta)$ through a softmax function: $f(x; \theta) = \text{softmax}(z(x, \theta))$. Thus, $f(x; \theta) = [p(0|x;\theta), p(1|x;\theta), \dots, p(C-1|x;\theta)]^\top$.
Given a training dataset $Z_n = \{(x_i,y_i): i=1, \dots, n\}$ drawn i.i.d. from $\mathscr{D}$, the model is typically trained by minimizing a loss function. For classification, the empirical Cross-Entropy (CE) loss will be written as follows:
\begin{align*}
    {L}(\theta) =\frac{1}{n}\sum_{i=1}^n l_i(\theta),
\end{align*} 
where $l_i(\theta) = l(x_i, y_i;\theta)= -\log p(y_i|x_i;\theta)$.

\subsection{Fisher Information Matrix and KL Divergence}
The Fisher information matrix (FIM), $F(\theta)$, for the family of probability density $p(x,y;\theta)=p(x)p(y|x;\theta)$ parameterized by parameters $\theta$  is defined as
\begin{align}
    &F(\theta) =    \mathbb{E}_{x}\left[\mathbb{E}_{y\sim p(y|x;\theta)}
            \left[\nabla_\theta l(x,y;\theta)\nabla_\theta l(x,y;\theta)^\top\right]\right] \nonumber\\
%            &= \frac{1}{n}\sum_{i=1}^n \left[\nabla_{\theta} z(x_i,\theta)\mathbb{E}_{y\sim p(y|x;\theta))}\left[\nabla_z l(x_i,y;\theta)\nabla_z l(x_i,y;\theta)^\top\right]\nabla_{\theta}z(x_i,\theta)^\top\right] \\
             % \quad \text{(for CE loss)} \ 
             &= \mathbb{E}_{x} \left[\nabla_{\theta}z^\top\left(\mathrm{diag}(f(x;\theta))- f(x;\theta)f(x;\theta)^\top\right)\nabla_{\theta}z\right]. \label{FIM}
\end{align}
In practice, the expectation $\mathbb{E}_{p(x)}$ is often approximated by an empirical average over the available data (e.g., training data $\{x_i\}_{i=1}^n$ or unlabeled data).

The Kullback-Leibler (KL) divergence between the output distributions of a model with parameters $\theta$ and a slightly perturbed model $\theta+\delta$, $f(x;\theta)$ and $f(x;\theta+\delta)$, respectively, can be locally approximated using a second-order Taylor expansion with respect to $\delta$ as:
\begin{align}
    \mathbb{E}_{x} \left[ \mathrm{KL}\left(f(x; \theta)\| f(x; \theta+\delta)\right)\right] \nonumber\\= \frac{1}{2} \delta^\top F(\theta) \delta + O(\|\delta\|^3).
    \label{eq:kl_fim_approx}
\end{align}

\subsection{Loss Hessian and Gauss-Newton Approximation}
The geometry of the empirical loss surface $L(\theta)$ is described by its Hessian matrix $H(\theta)=\nabla^2_{\theta}L(\theta)$. For the CE loss, the Hessian can be approximated by the Gauss-Newton (GN) matrix, $G(\theta)$. The second derivative of the per-sample CE loss $\ell_i(\theta)$ with respect to the logits $z_i = z(x_i; \theta)$, $\nabla^{2}_{z}\ell_i(\theta)=\mathrm{diag}(f(x_i;\theta))- f(x_i;\theta)f(x_i;\theta)^\top$, depends only on the model's output probabilities $f(x_i;\theta)$.
Consequently, the per-sample GN term, $G_i(\theta) = \nabla_\theta z_i^\top (\nabla^{2}_{z}\ell_i) \nabla_\theta z_i$, is equivalent to the FIM contribution in Eq.~\eqref{FIM}.
The empirical GN matrix, $G(\theta) = \frac{1}{n}\sum_{i=1}^n G_i(\theta)$, thus often termed the empirical FIM, provides a positive semi-definite approximation to $H(\theta)$:
\begin{equation*}
    H(\theta) \approx G(\theta) = F(\theta)
\end{equation*}
and is frequently used in optimization \citep{martens2020new, pascanu2014revisitingngd}.

\section{Assessing Generalization Gap via Local Inconsistency}
 This section introduces our proposed measure, local inconsistency, designed to capture the generalization gap. We first define local inconsistency and elucidate its theoretical underpinnings by connecting it to the FIM and the loss Hessian. We then discuss its relationship with inconsistency \citep{johnson2023inconsistencyinstabilitygeneralizationgap}. Finally, we present empirical results demonstrating the correlation between local inconsistency and the generalization gap, comparing it with other common measures.

\subsection[local Inconsistency]{Local Inconsistency, $S_\rho(\theta)$}
We introduce local inconsistency, $S_\rho(\theta)$, defined as:
\begin{equation}
    S_\rho(\theta) = \max\nolimits_{\|\delta\|\le\rho}\mathbb{E}_{x \sim p(x)} [ \mathrm{KL}(f(x; \theta)\| f(x; \theta+\delta))],
    \label{local_inconsistency}
\end{equation}
which represents the sensitivity of the model's output distribution $f(x; \theta)$ with respect to the worst perturbations $\delta$, within a Euclidean ball of radius $\rho$ around the parameter $\theta$. 
Intuitively, a high value of $S_\rho(\theta)$ indicates that the model's output distribution is highly sensitive to small perturbations in parameter space. 
This sensitivity suggests potential instability or uncertainty in the model's predictions associated with the vicinity of $\theta$.

\paragraph{Practical Advantages of $S_\rho$}
Local inconsistency shares a practical advantage with sharpness-based measures \citep{keskar2017largebatchtrainingdeeplearning, foret2021SAM} in that it can be calculated using a \textbf{single} trained model. Furthermore, like disagreement \citep{jiang2022assessinggeneralizationsgddisagreement} and inconsistency \citep{johnson2023inconsistencyinstabilitygeneralizationgap}, our metric can be estimated using only \textbf{unlabeled} data. 
A notable advantage over inconsistency and disagreement estimation is that evaluating $S_\rho$ does not require training multiple model instances derived from the same training procedure and is \textbf{directly regularizable}. 
This potentially makes $S_\rho$ more computationally efficient and practical to compute, especially when model training is resource-intensive.

\subsection{Connection to FIM and Hessian}
\label{sec:connect_LI_to_FIM}
The relationship between our metric $S_\rho$ and the Fisher Information Matrix (FIM) can be established by leveraging the local quadratic approximation of the KL divergence, as outlined in Section~\ref{sec:background}.
With this quadratic approximation, we can approximate $S_\rho(\theta)$ with the maximum eigenvalue of FIM, scaled by $\rho^2/2$:
\begin{align}
    S_\rho(\theta) &\approx \max\nolimits_{\|\delta\|\le\rho}\frac{1}{2}\delta^\top F(\theta)\delta\nonumber\\
    &=\frac{1}{2}(\rho v_{\max})^\top F(\theta) (\rho v_{\max}) \nonumber\\
    &= \frac{1}{2}\rho^2\lambda_{\max}(F(\theta)),
\end{align}
where $v_{\max}$ is the eigenvector corresponding to the largest eigenvalue $\lambda_{\max}$ of $F(\theta)$. 
Remarkably, this approximation requires only the model $\theta$ and unlabeled data (used to compute the expectation). 

The Fisher Information Matrix $F(\theta)$, to which $S_\rho(\theta)$ is related via its maximum eigenvalue, also connects to the Hessian of the loss function $H(\theta)$.
As detailed in Section~\ref{sec:background}, for Negative Log Likelihood losses such as CE, the Hessian can be approximated by the Gauss-Newton matrix $G(\theta)$, equivalent to empirical FIM computed using training data. 

Consequently, when calculating $S_\rho(\theta)$ using the training data, it approximates $\frac{1}{2}\rho^2\lambda_{\max}(G(\theta))$. Given that $G(\theta)$ often provides a good approximation to the true loss Hessian near a local minimum, $S_\rho(\theta)$ therefore offers insights into the maximum curvature of the loss landscape in that vicinity.

\subsection{Local Inconsistency and Generalization Bounds}
\label{sec:bounds-fim}

Under near interpolation, which is a standard regime in modern deep learning \citep{zhang2017understandingdeeplearningrequires}, the empirical Hessian splits into a Fisher/Gauss–Newton term plus a small residual, which lets us replace $\lambda_{\max}(H_S(\theta))$ with $\lambda_{\max}(F_S(\theta))$ up to a spectral slack.

\begin{theorem}[FIM-based generalization bound]
\label{thm:fim-generalization}
Under the same assumptions of Theorem~3.1 of \citet{luo2024explicit}, for any $\xi \in (0, 1)$  and $\rho > 0$, with probability at least $1-\xi$ over the choice of $S \sim \mathscr{D}$, we have 
\begin{align*}
L_\mathscr{D}(\theta)
&\le
L_S(\theta)
+\frac{\rho^2}{2}\Big(\lambda_{\max}\!\big(F_S(\theta)\big)+\varepsilon_R\Big)
+\tfrac{Cm^{3/2}\rho^3}{6}\\&+O\left(\frac{1}{\sqrt n}\right),
\end{align*}
where $n$ is the number of samples.
\end{theorem}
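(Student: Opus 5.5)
The plan is to start from the Hessian-based bound in Theorem~3.1 of \citet{luo2024explicit} and replace the Hessian's top eigenvalue by that of the empirical FIM, paying a spectral slack $\varepsilon_R$. Under its assumptions (bounded loss, third derivatives bounded by $C$, prior/posterior choice in the PAC-Bayes argument), that theorem states, with probability at least $1-\xi$,
\begin{align*}
L_\mathscr{D}(\theta)\le L_S(\theta)+\frac{\rho^2}{2}\lambda_{\max}\big(H_S(\theta)\big)+\frac{Cm^{3/2}\rho^3}{6}+O\Big(\frac{1}{\sqrt n}\Big).
\end{align*}
This comes from bounding the sharpness term $\max_{\|\delta\|\le\rho}L_S(\theta+\delta)-L_S(\theta)$ by a second-order Taylor expansion: the gradient term is controlled or absorbed as in their statement, the quadratic term gives $\tfrac12\delta^\top H_S\delta\le\tfrac{\rho^2}{2}\lambda_{\max}(H_S)$, and the cubic remainder is at most $\tfrac{C}{6}\|\delta\|_1^3\le \tfrac{C m^{3/2}\rho^3}{6}$. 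I will take this inequality as given rather than re-derive the PAC-Bayes part.

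The substantive step is the Gauss--Newton decomposition of the cross-entropy Hessian. By the chain rule, each per-sample Hessian splits as
\begin{align*}
\nabla^2_\theta l_i=\nabla_\theta z_i^\top\big(\mathrm{diag}(f_i)-f_if_i^\top\big)\nabla_\theta z_i+\sum_{c}\big(f_{i,c}-\mathbb{1}[c=y_i]\big)\nabla^2_\theta z_{i,c}.
\end{align*}
Averaging over $S$ gives $H_S=G_S+R_S$. Here $G_S=F_S$ is the empirical FIM from Eq.~\eqref{FIM} with the expectation taken over the training inputs, and $R_S=\frac1n\sum_i\sum_c(f_{i,c}-\mathbb{1}[c=y_i])\nabla^2_\theta z_{i,c}$. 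Weyl's inequality then gives $\lambda_{\max}(H_S)\le\lambda_{\max}(F_S)+\lambda_{\max}(R_S)$. I will define $\varepsilon_R:=\max\{\lambda_{\max}(R_S),0\}$, or alternatively $\|R_S\|_2$, and substitute into the bound above to obtain the claimed inequality.

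The main obstacle is making $\varepsilon_R$ meaningful rather than a tautological slack. Near interpolation, the residual coefficients satisfy $|f_{i,c}-\mathbb{1}[c=y_i]|\le 2(1-f_{i,y_i})$. So if the logit Hessians are bounded, $\|\nabla^2_\theta z_{i,c}\|\le B$, then $\varepsilon_R\le 2B\,C\cdot\frac1n\sum_i(1-p(y_i|x_i;\theta))\le 2BC\,L_S(\theta)$, using $1-p\le-\log p$. This shows the slack is small in the interpolation regime.

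Two further points need care. First, the FIM must be the empirical one on $S$, which matches Section~\ref{sec:background}. Second, the cubic constant $C$ should be read as the third-derivative bound from Luo et al., not the number of classes. The probability statement and the $O(1/\sqrt n)$ term transfer unchanged, since the substitution is deterministic.
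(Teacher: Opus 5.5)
Your proposal is correct and follows the paper's proof essentially step for step. Like the paper, you take Luo et al.'s Hessian-based bound as a black box, write $H_S=F_S+R_S$ via the Gauss--Newton identity for softmax--cross-entropy, apply Weyl's inequality, and substitute, carrying the probability and $O(1/\sqrt n)$ terms over unchanged. Your extra estimate $\varepsilon_R\lesssim L_S(\theta)$ quantifies the paper's informal near-interpolation remark, where the paper simply assumes $\|R_S\|\le\varepsilon_R$. One small inaccuracy in your sketch: Luo et al.'s bound comes from a Gaussian-perturbation PAC-Bayes argument with $\rho=\sqrt m\,\sigma$, not from a worst case over the $\rho$-ball, but this is harmless because you use the bound only as a black box.
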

In particular, at (near) interpolation ($\varepsilon_R\!\approx\!0$), the residual vanishes and the curvature term reduces to $\lambda_{\max}(F_S(\theta))$ with no degradation. 
We defer the exact complexity term and proof to Appendix~\ref{app:bounds-proofs}.

This bound suggests that minimizing a combination of $L_{\mathcal{S}}(\theta)$ and the local inconsistency $S_\rho(\theta)$ lowers the upper bound on the true risk $L_\mathscr{D}(\theta)$. This provides a theoretical motivation for our Inconsistency-Aware Minimization (IAM) framework, which aims to find solutions that are accurate on the training data while keeping output sensitivity low in parameter space, as measured by $S_\rho(\theta)$.

\subsection[Relation with inconsistency]{Relation with Inconsistency in \citet{johnson2023inconsistencyinstabilitygeneralizationgap}}
Local inconsistency exhibits an interesting relationship to the inconsistency in \citet{johnson2023inconsistencyinstabilitygeneralizationgap} defined as:
\begin{align*}
\mathcal{C}_P = \mathbb{E}_{Z_n} \mathbb{E}_{\theta, \theta' \sim \Theta_{P|Z_n}} \mathbb{E}_{x \sim p(x)} [\mathrm{KL}(f(x; \theta) \| f(x; \theta'))].    
\end{align*}
We consider the conditional inconsistency for a fixed $Z_n$, denoted $\mathcal{C}_{P|Z_n}$, without outer expectation. Then our proposed metric, $S_\rho(\theta_{Z_n})$, is approximately proportional to the conditional inconsistency $\mathcal{C}_{P|Z_n}$:
\begin{equation}
    \frac{m}{2C} \mathcal{C}_{P|Z_n} \lesssim S_\rho(\theta_{Z_n}) \lesssim \frac{m}{2} \mathcal{C}_{P|Z_n}, \label{eq:final_relation_eng}
\end{equation}
under certain assumptions, such as assuming the parameter posterior $\Theta_{P|Z_n}$ as a distribution with isotropic covariance and $\theta_{Z_n}$ as mean. 
This connection arises because both metrics are related to the local geometry captured by the FIM at $\theta_{Z_n}$, with $S_\rho$ being linked to its maximum eigenvalue and $\mathcal{C}_{P|Z_n}$ to its trace. Practically, the eigenspectra of the FIM of a neural network are observed to be dominated by a few large eigenvalues (specifically  related to the number of classes, $C$ in classification task) while remaining eigenvalues are near zero \citep{sagun2018empiricalanalysishessianoverparametrized,  papyan2018spectrumdeepnethessians, papyan2019measurements, Papyan2020trace, karakida2019universal, Karakida2021Pathological}. 
This observation indicates that the ratio $\lambda_{\max}(F(\theta))/\mathrm{Tr}(F(\theta))$ is larger than $\frac{1}{C}$ ($C \ll m$). For detailed derivation, please see Appendix~\ref{appendix:relation_incons}.

\subsection[Estimating the metric]{Estimating Local Inconsistency $S_\rho(\theta)$}
\label{sec:estimating_s_rho}
Directly computing $S_{\rho}(\theta)$ requires solving the maximization problem over the high-dimensional parameter perturbation $\delta$.
For deep neural networks, finding the exact maximum within the $L_2$-ball of radius $\rho$ is generally intractable. 
Therefore, we employ numerical approximation methods.

For small perturbations $\delta$, the expected KL divergence can be accurately approximated by a second-order Taylor expansion involving the Fisher Information Matrix (FIM), $F(\theta)$, as Eq.~\eqref{eq:kl_fim_approx} of Section~\ref{sec:background} . 
Under quadratic approximation, as discussed in Section \ref{sec:connect_LI_to_FIM}, the optimal perturbation $\delta^* = \rho v_{\max}$, the maximum value is then $S_\rho(\theta) = \frac{1}{2}\rho^2\lambda_{\max}$, and the gradient of the approximated KL divergence with respect to $\delta$ is $F(\theta)\delta$.

This connection motivates, instead of the usual Projected Gradient Ascent update $\delta \leftarrow \Pi_{\{\delta : \|\delta\| \leq \rho \}}( \delta + \eta F(\theta)\delta)$, an iterative gradient ascent approach that updates:
\begin{equation*}
    \delta_{k+1} = \frac{\rho}{\|F(\theta)\delta_k\|}F(\theta)\delta_k, \quad \delta_0=\varepsilon \sim \mathcal{N}\left(0, \frac{\sigma^2}{m}I_m\right),
\end{equation*}
where $\sigma^2$ is initial noise scale. 
Iterative gradient ascent is precisely one iteration of the Power Iteration method used to find the dominant eigenvector of $F(\theta)$.
\begin{algorithm}[!b] 
    \caption{Estimation of $S_\rho(\theta)$}
    \label{alg:compute_s_rho}
    \begin{algorithmic}
        \STATE \textbf{Input:} model parameter $\theta \in \mathbb{R}^m$, noise scale $\sigma^2$, radius $\rho > 0$, number of steps $K \ge 1$
        \STATE \textbf{Initialize} $\delta_0$ randomly with $\mathcal{N}(0, \frac{\sigma^2}{m}\mathrm{I}_m)$ 
        \FOR{$k=0$ to $K-1$}
            \STATE Compute $g_k = \nabla_{\delta} \mathbb{E}_{x} \mathrm{KL}(f(x;\theta)\| f(x; \theta+\delta))|_{\delta=\delta_k}$
            \STATE Update perturbation: $\delta_{k+1} =\rho \frac{g_k}{\|g_k\|}$
        \ENDFOR
        \STATE \textbf{return} $\mathbb{E}_{x\sim p(x)}\mathrm{KL}(f(x;\theta)\|f(x; \theta+\delta_K))$
    \end{algorithmic}
\end{algorithm}

\paragraph{Algorithm for estimating $S_\rho(\theta)$}
Based on the above, we propose Algorithm \ref{alg:compute_s_rho} to estimate $S_\rho(\theta)$. This algorithm performs $K$ steps of normalized gradient ascent (effectively, Power Iteration under the quadratic approximation) to find an approximate maximizing perturbation $\delta^*$. Algorithm \ref{alg:compute_s_rho} requires $K$ gradient computations.

\begin{figure*}[!ht]
    \centering
    \begin{subfigure}[b]{0.30\textwidth}
        \centering
        \includegraphics[width=\linewidth]{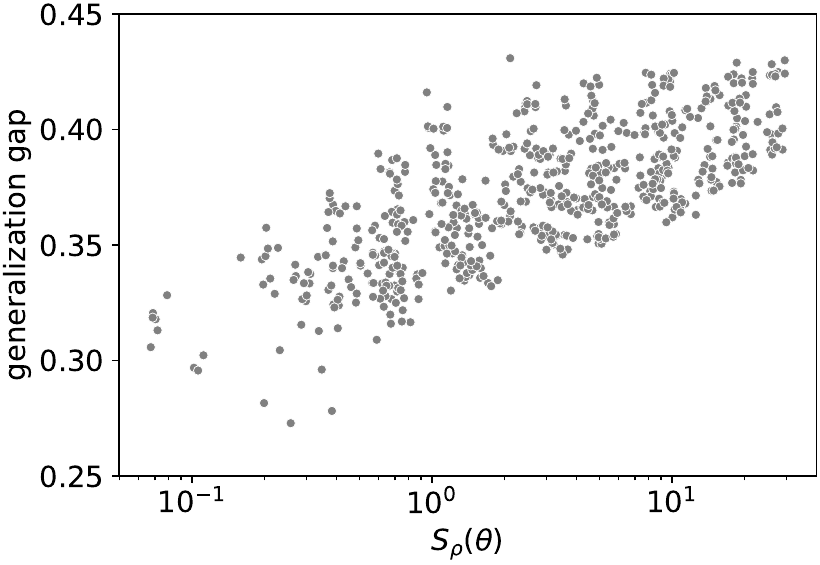} 
        \caption{$S_\rho$. $\tau= 0.5141$}
    \end{subfigure}
    \hfill
    \begin{subfigure}[b]{0.30\textwidth}
        \centering
        \includegraphics[width=\linewidth]{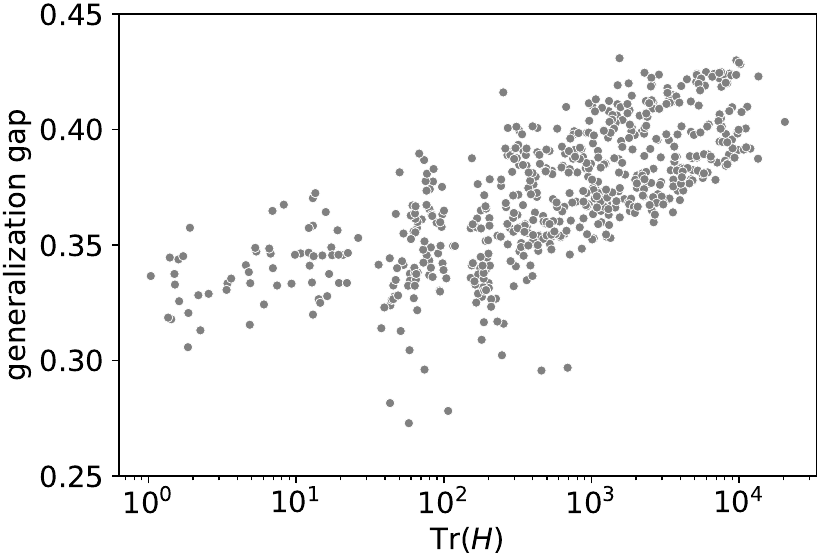}
        \caption{$\mathrm{Tr}(H)$. $\tau= 0.5444$}
    \end{subfigure}
    \hfill
    \begin{subfigure}[b]{0.30\textwidth}
        \centering
        \includegraphics[width=\linewidth]{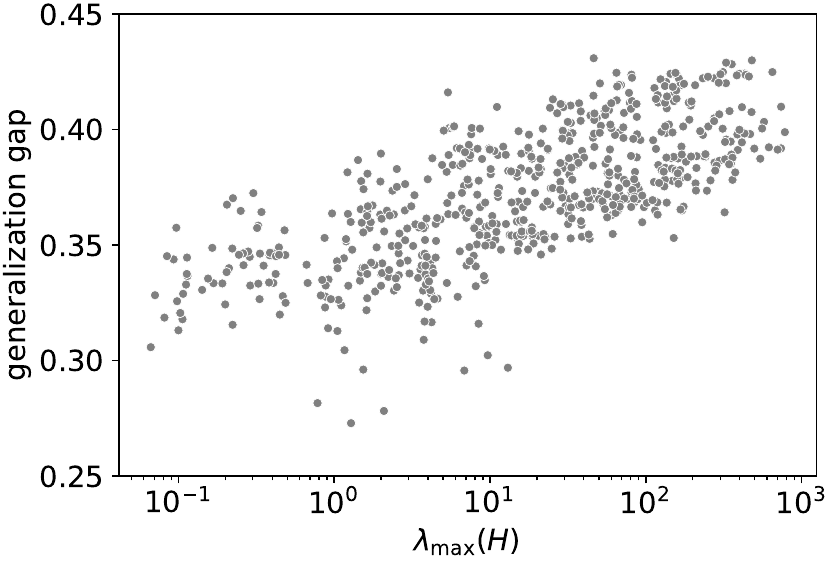}
        \caption{$\lambda_{\max}(H)$. $\tau= 0.5175$}
    \end{subfigure}
    \centering
    \begin{subfigure}[b]{0.30\textwidth}
        \centering
        \includegraphics[width=\linewidth]{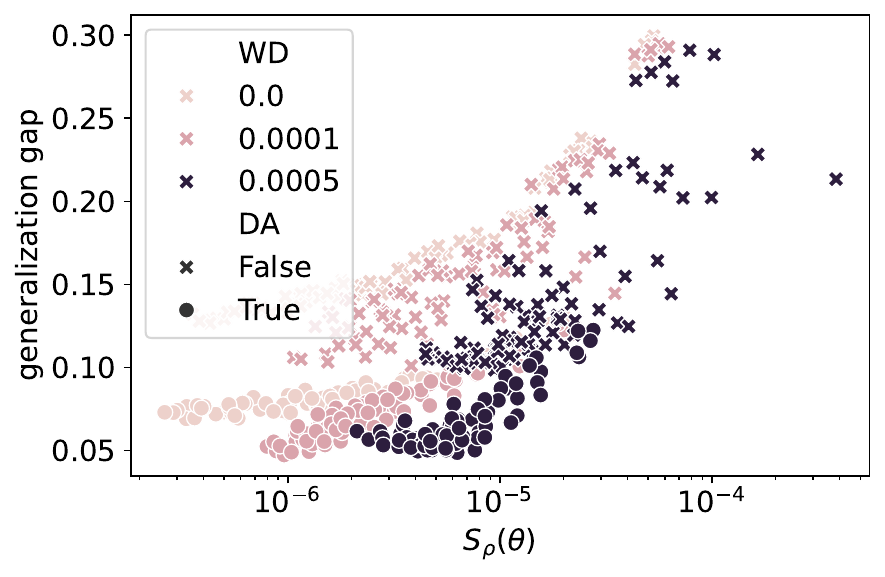} 
        \caption{$S_\rho$. $\tau= 0.3658$}
    \end{subfigure}
    \hfill
    \begin{subfigure}[b]{0.30\textwidth}
        \centering
        \includegraphics[width=\linewidth]{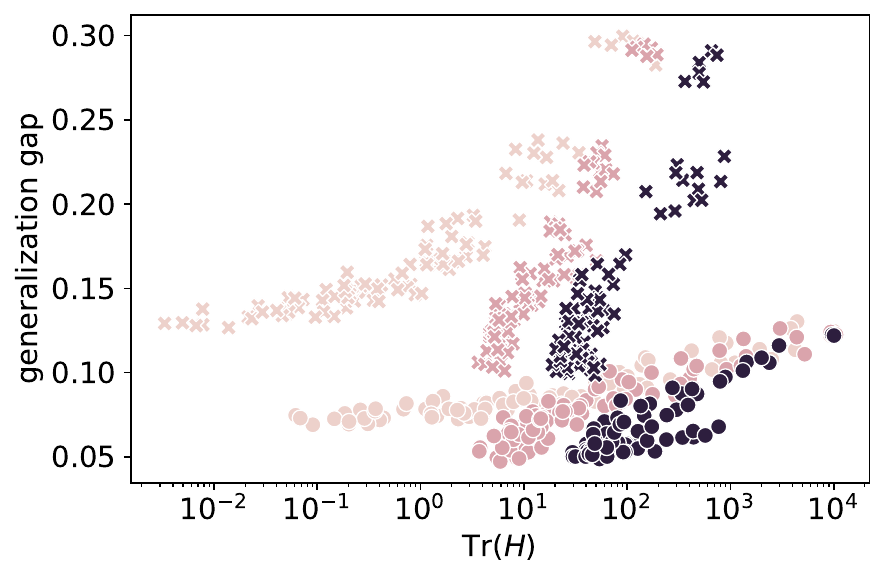}
        \caption{$\mathrm{Tr}(H)$. $\tau = -0.0439$}
    \end{subfigure}
    \hfill
    \begin{subfigure}[b]{0.30\textwidth}
        \centering
        \includegraphics[width=\linewidth]{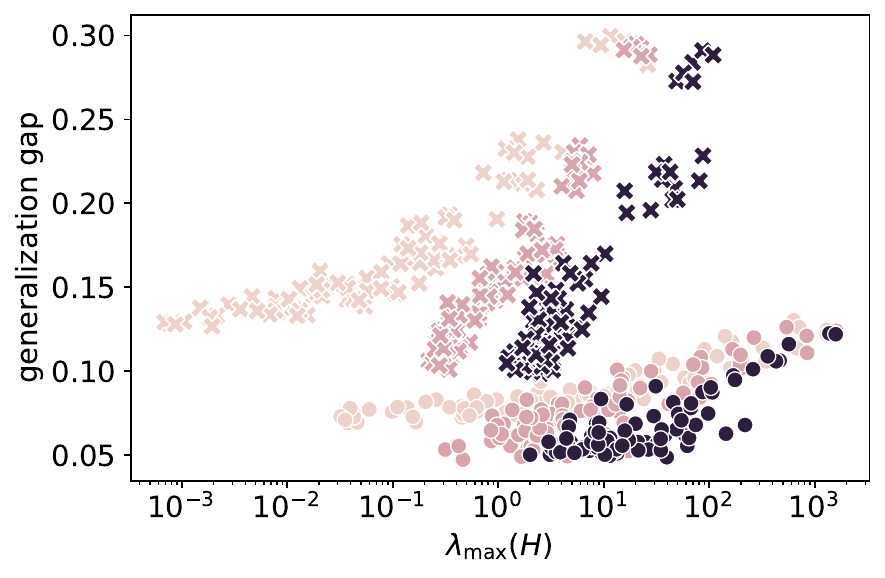}
        \caption{$\lambda_{\max}(H)$. $\tau = -0.1200$}
    \end{subfigure}
    \caption{Local inconsistency and sharpness measures vs the generalization gap.}
    \label{fig:measure_comparison}
\end{figure*}

\subsection{Empirical Results}
\label{sec:empirical_validation_local_inconsistency}

To compare the predictive capability of $S_\rho$ for the generalization gap with traditional loss-based sharpness measures, we conducted experiments on CIFAR-10. We trained two distinct architectures, a 6-layer CNN (6CNN) and a Wide Residual Network (WRN28-2) \citep{zagoruyko2017wideresidualnetworks}, under various hyperparameter settings (details in Appendix \ref{appendix:setting}). $S_\rho$ was estimated using a disjoint, unlabeled dataset. For comparison, we also computed two common sharpness measures: the trace, $\mathrm{Tr}(H)$, and the maximum eigenvalue, $\lambda_{\max}(H)$.

Figure \ref{fig:measure_comparison} presents scatter plots of these metrics against the generalization gap (test error - train error), with Kendall's Tau ($\tau$) reported for each. For the simpler 6CNN model (top row), $S_\rho$ ($\tau=0.5141$) exhibited a positive correlation with the generalization gap, comparable to $\mathrm{Tr}(H)$ ($\tau=0.5444$) and $\lambda_{\max}(H)$ ($\tau=0.5175$). This suggests that for smaller models, various geometric measures may similarly capture aspects of generalization.

However, for the larger WRN28-2 model under varying data augmentation and weight decay settings (bottom row), a more nuanced behavior emerged. 
As noted by \citet{andriushchenko2023modernlookrelationshipsharpness}, different training configurations—specifically combinations of weight decay and data augmentation—can form distinct solution subgroups. 
$\mathrm{Tr}(H)$ and $\lambda_{\max}(H)$ show positive correlations only within such subgroups but exhibit negative overall correlations globally ($\tau=-0.0439$ and $\tau=-0.1200$, respectively).
In stark contrast, our $S_\rho$ maintained a positive, albeit reduced, global correlation ($\tau=0.3658$).

This divergence, particularly with larger models and data augmentation, suggests that the output-based formulation of local inconsistency is more robust to the absolute scale effects that vary across models and data-augmentation settings than traditional Hessian-based sharpness.
While the predictive utility of sharpness metrics can be confounded by these subgroup effects, $S_\rho$ demonstrates more consistent global predictiveness, hinting at its potential as a more robust generalization indicator in complex training scenarios.

\section{Inconsistency-Aware Minimization (IAM): Incorporating Local Inconsistency into the Objective}
\label{sec:iam}

Our empirical findings suggest that local inconsistency, $S_\rho(\theta)$ defined in Eq.~\eqref{local_inconsistency}, correlates with the generalization gap. This motivates its use as a regularizer to guide the optimization towards solutions that not only fit the training data, but also exhibit low sensitivity in their output distributions with respect to parameter perturbations. We propose two strategies to incorporate local inconsistency into the training objective.

\begin{algorithm}[t]
    \caption{Inconsistency-Aware Minimization}
    \label{alg:iam_s_training}
    \begin{algorithmic}
        \STATE \textbf{Input:} Initial model parameters $\theta^0$; Learning rate $\eta$; Base optimizer $\mathcal{O}$; Neighborhood size $\rho$; training set $Z_n$; Batch size $b$; Number of steps $K$ for Algorithm~\ref{alg:compute_s_rho}.
        \WHILE{not converged}
            \STATE Sample batch $\{(x_i, y_i)\}_{i=1}^{b}$.
            \STATE Compute $\delta_{K}$ from Algorithm~\ref{alg:compute_s_rho} using current $\theta$, $\rho$, and data $\{x_i\}_{i=1}^{b}$.
            \IF{IAM-S}
                \STATE Compute gradient $g = \nabla_\theta L(\theta)|_{\theta+ \delta_{K}}$
            \ELSIF{IAM-D}
                \STATE Compute gradient $g = \nabla_\theta (L(\theta)+\beta S_\rho(\theta))$
            \ENDIF
            \STATE $\theta \leftarrow \mathcal{O}(\theta, g, \eta)$
        \ENDWHILE
        \STATE \textbf{Return} optimized parameters $\theta$.
    \end{algorithmic}
\end{algorithm}

1.  \textbf{Direct Regularization (IAM-D)}: This approach directly penalizes local inconsistency by adding it to the standard training loss $L(\theta)$:
\begin{equation}
    L_{\text{IAM-D}}(\theta) = L(\theta) + \beta S_\rho(\theta) %= L(\theta) + \beta \max_{\|\delta\|_2 \le \rho} \frac{1}{n}\sum_{i=1}^n\mathrm{KL}(f(x_i, \theta)\| f(x_i, \theta+\delta)),
    \label{eq:iam_direct}
\end{equation}
where $\beta > 0$ is a hyperparameter balancing the trade-off. This objective seeks parameter values $\theta$ for which the model outputs are consistent across the neighborhood.

2.  \textbf{SAM-like Approach (IAM-S)}: Inspired by SAM \citep{foret2021SAM}, this method aims to find parameters $\theta$ that reside in a neighborhood of uniformly low loss by minimizing the loss at an adversarially perturbed point $\theta+\delta^*$:
\begin{equation}
    L_{\text{IAM-S}}(\theta) = L(\theta+\delta^*), 
    \label{eq:iam_s_objective}
\end{equation}
$\delta^*= \operatorname*{arg\,max}_{\|\delta\| \le \rho} \frac{1}{n}\sum_{i=1}^n\mathrm{KL}(f(x_i, \theta)\| f(x_i, \theta+\delta))$ is the perturbation that maximizes the local inconsistency term with current mini-batch.
Note that the objective minimizes the original loss $L$ at the perturbed point $\theta+\delta$:
\begin{align*}
    L(\theta+\delta) \approx L(\theta)+\delta^\top\nabla_\theta L(\theta) + \frac{1}{2}\delta^\top G(\theta) \delta.
\end{align*}
Note that our perturbation $\delta$ and $-\delta$ have equal probability of being sampled.
Consequently, the symmetry mitigates the influence of the first-order term $\delta^\top\nabla_\theta L(\theta)$ in expectation, suggesting IAM-S implicitly minimizes the principal eigenvalues of $G(\theta)$ (equivalent to empirical FIM).

\subsection{Algorithm for Inconsistency-Aware Minimization}
\label{sec:iam_s_algorithm}

Optimizing $L_{\text{IAM-D}}(\theta)$ and $L_{\text{IAM-S}}(\theta)$ involves a min-max procedure.
The inner maximization to find $\delta^*$ (i.e., computing $S_\rho(\theta)$ and the corresponding $\delta^*$) is performed using an Algorithm~\ref{alg:compute_s_rho} with current mini-batch, typically for $K=1$ to match the number of additional gradient computations to that of SAM.
For \textbf{IAM-D}, we simply add the regularization term $\beta S_\rho(\theta)$ to the original loss $L(\theta)$ and compute the gradient $g = \nabla_\theta (L(\theta) + \beta S_\rho(\theta))$.
In the case of \textbf{IAM-S}, the outer minimization step calculates the gradient of the loss at the perturbed point $\theta + \delta_{K}$. Following the approximation used in SAM, we drop the second-order terms arising from the derivative $\nabla_\theta \delta$ (see Appendix~\ref{app:theoretical_justification} for theoretical justification for IAM-D).
Finally, a base optimizer $\mathcal{O}$ (e.g., SGD or Adam) updates $\theta$ using the computed gradient $g$, as summarized in Algorithm~\ref{alg:iam_s_training}.

\begin{figure}[!t]
    \centering
    \begin{subfigure}[b]{0.48\linewidth}
        \centering
        \includegraphics[width=\linewidth]{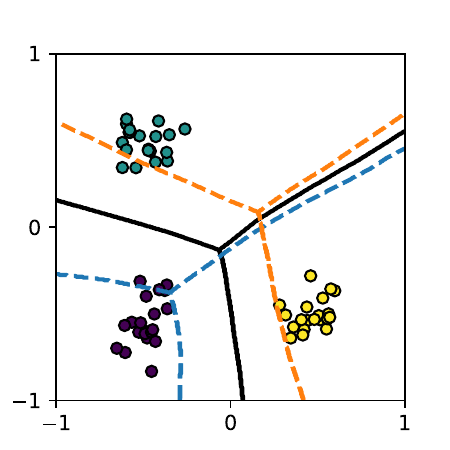}
        \caption{$f(x;\theta\pm\delta_1)$}
    \end{subfigure}
    \hfill
    \begin{subfigure}[b]{0.48\linewidth}
        \centering
        \includegraphics[width=\linewidth]{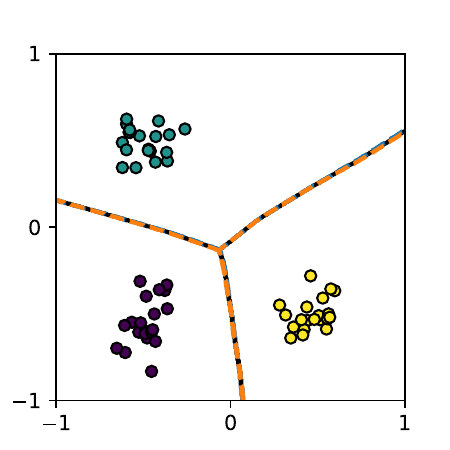}
        \caption{$f(x;\theta\pm\varepsilon)$}
    \end{subfigure}
    \caption{A synthetic classification example. The black, blue, orange lines correspond to decision boundaries of the NN with trained parameter, and parameter perturbed by $\pm\delta_1$ (a) or $\pm\varepsilon$ (b).}
    \label{fig:decision_boundary_main}
\end{figure}

Figure~\ref{fig:decision_boundary_main} intuitively shows the role of perturbation $\delta_1$ with decision boundaries of a Neural Network (NN) for two-dimensional synthetic data generated from mixtures of three Gaussians.
As \citet{jang2022reparametrization} show, the density change occurs mainly along the principal eigenvectors of FIM; thus, $\delta_1$ is aligned with principal eigenvectors of FIM and causes a meaningful output distribution shift while the noise perturbation $\varepsilon$ with the same norm does not.
See Appendix~\ref{app:Decision_boundary_example} for a detailed analysis of the effectiveness of $\delta_1$ in this setting. 
Moreover, as $K$ increases, IAM benefits from a more accurate estimation of local inconsistency, offering a trade-off between performance and cost (see Appendix~\ref{app:K_ablation}).

\begin{table*}[!ht]
\centering
\caption{Test Error (mean $\pm$ stderr) of SGD, SAM, ASAM, and IAM across datasets.}
\label{tab:1}
\begin{tabular}{l r r r r r}
\toprule
\multicolumn{1}{c}{Dataset} & 
\multicolumn{1}{c}{SGD} & 
\multicolumn{1}{c}{SAM} & 
\multicolumn{1}{c}{ASAM} & 
\multicolumn{1}{c}{IAM-D} & 
\multicolumn{1}{c}{IAM-S} \\
\midrule
CIFAR-10   &  3.68  \tiny{$\pm$0.04}  & 3.31 \tiny{$\pm$0.01}  & \textbf{3.15} \tiny{$\pm$0.02} & 3.28 \tiny{$\pm$0.06} & 3.28 \tiny{$\pm$0.03} \\
CIFAR-100  & 19.17 \tiny{$\pm$0.19} & 17.63 \tiny{$\pm$0.12} & 17.15 \tiny{$\pm$0.11} & 17.16 \tiny{$\pm$0.03} & \textbf{16.82} \tiny{$\pm$0.01} \\
F-MNIST    & 4.45 \tiny{$\pm$0.05}  & 4.13 \tiny{$\pm$0.02}  & 4.11 \tiny{$\pm$\textless{}0.01} & 4.13 \tiny{$\pm$0.04} & \textbf{4.10} \tiny{$\pm$0.05} \\
SVHN       & 3.82 \tiny{$\pm$0.06}  & 3.47 \tiny{$\pm$0.09}  & 3.24 \tiny{$\pm$0.04}  & \textbf{3.13} \tiny{$\pm$0.06} & \textbf{3.13} \tiny{$\pm$0.01} \\
\bottomrule
\end{tabular}
\end{table*}

\subsection{Empirical Evaluation in Supervised Learning}

We evaluate the performance of IAM against SGD, SAM, and ASAM \citep{pmlr-v139-kwon21b} on standard image classification tasks. 
Wide ResNet (WRN) \citep{zagoruyko2017wideresidualnetworks} serves as the baseline model, trained on CIFAR-\{10, 100\}, F-MNIST, and SVHN with standard data augmentations. 
Specifically, we employ WRN-16-8 for CIFAR-\{10, 100\} and WRN-28-10 for F-MNIST and SVHN. 
Optimal hyperparameters were determined via a grid search: for IAM-D, we set $\beta = 1.0, \rho = 0.1$ on CIFAR-10 and $\beta = 10.0, \rho = 0.1$ on CIFAR-100; for IAM-S, we used $\rho=0.1$ and $\rho=0.5$ for CIFAR-10 and CIFAR-100, respectively.
To ensure a fair comparison under a fixed computational budget, we report the best test error for SGD from either 200 or 400 epochs.
Detailed experimental settings are listed in Appendix~\ref{appendix:setting}.
For additional results of ViT \citep{dosovitskiy2021an} on CIFAR-10 and ImageNet, please refer to Table~\ref{tab:imagnet_VitS} and Table~\ref{tab:vit_cifar10_iamd} in Appendix~\ref{app:vit}.

Table~\ref{tab:1} summarizes the test error rates. Both IAM-D and IAM-S not only reduce test error compared to SGD but also achieve performance comparable to SAM and ASAM on overall datasets. 
Notably, on the complex CIFAR-100 dataset, IAM-S outperforms SAM by a margin of 0.81\%, demonstrating its effectiveness in challenging scenarios.

We further analyze the training dynamics in Figure~\ref{fig:incon_sgd_vs_IAM}, which illustrates the evolution of local inconsistency $S_\rho(\theta)$ and test accuracy for SGD and IAM-D.
IAM-D effectively suppresses the increase in $S_\rho(\theta)$ and mitigates overfitting. This effect is particularly evident after the learning rate decay points, where the test accuracy of SGD tends to degrade while its inconsistency rebounds. In contrast, IAM-D maintains a stable $S_\rho(\theta)$ below that of SGD throughout the training.
These observations suggest that minimizing local inconsistency helps confine the model to parameter regions with smoother output distributions, which correlates with the generalization improvements observed in Table~\ref{tab:1}.
\begin{figure}[t]
    \centering
    \begin{subfigure}{0.48\textwidth}
        \centering
        \includegraphics[width=\linewidth]{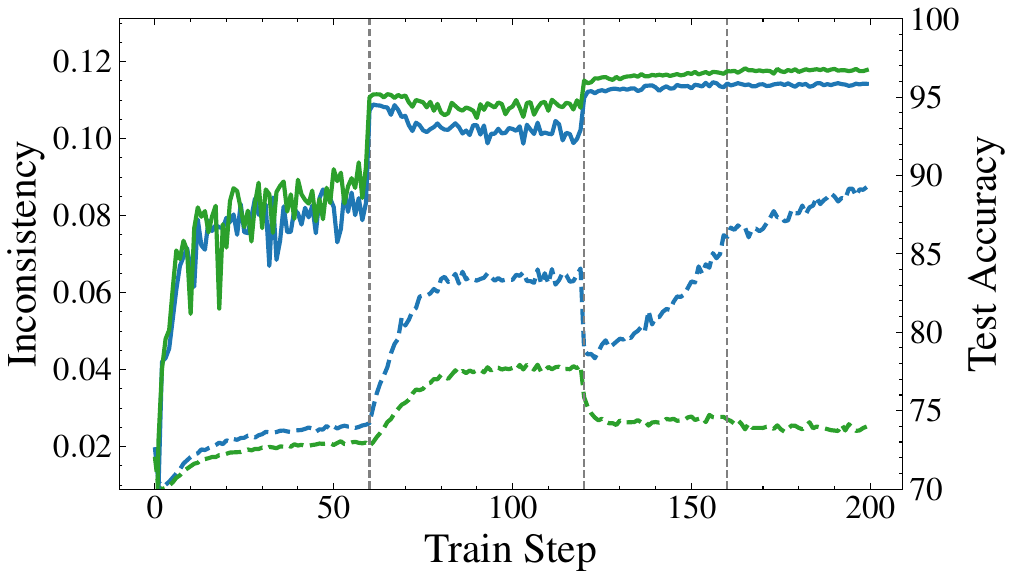} 
        \caption{CIFAR-10}
        \label{fig:incon_sgd_vs_IAMa}
    \end{subfigure}%
    \hfill
    \begin{subfigure}{0.48\textwidth}
        \centering
        \includegraphics[width=\linewidth]{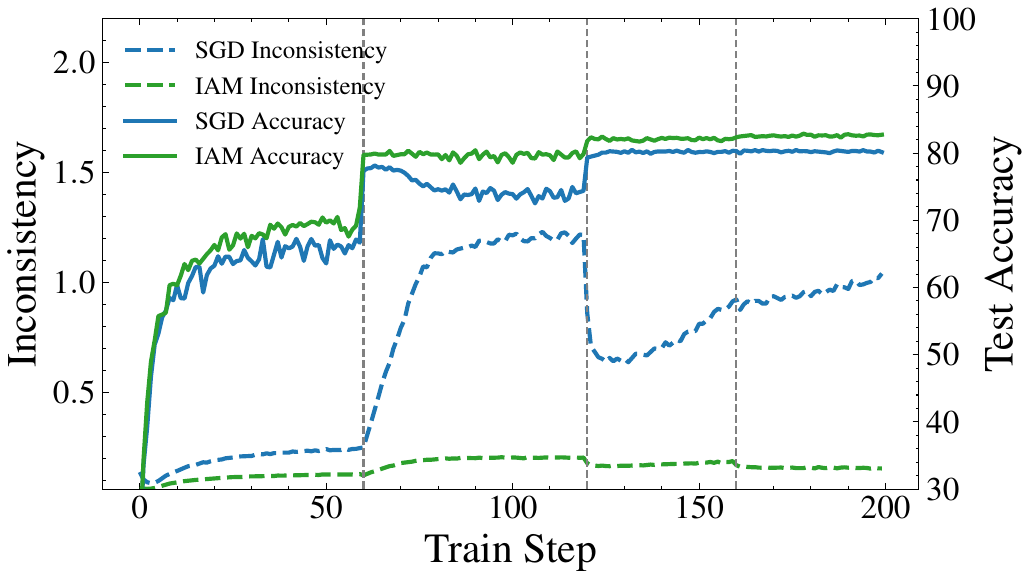} 
        \caption{CIFAR-100}
        \label{fig:incon_sgd_vs_IAMb}
    \end{subfigure}
    \caption{The evolution of the local inconsistency $S_\rho(\theta)$ and test accuracy with SGD and IAM-D.}
    \label{fig:incon_sgd_vs_IAM}
\end{figure}
\begin{table}[t]
\centering
\caption{Top-1 and Top-5 error (mean $\pm$ stderr) of ResNet-50 trained 200 epochs on ImageNet.}
\label{tab:imagnet}
\begin{tabular}{lcc}
\toprule
 & Top-1 & Top-5 \\
\midrule
SGD & 22.66\,\tiny{$\pm$\,0.12} & 6.51\,\tiny{$\pm$\,0.06} \\
SAM & 21.80\,\tiny{$\pm$\,0.12} & 5.99\,\tiny{$\pm$\,0.04} \\
IAM-D & \textbf{21.36}\,\tiny{$\pm$\,0.06} & \textbf{5.70}\,\tiny{$\pm$\,0.02} \\
IAM-S & 21.72\,\tiny{$\pm$\,0.07} & 5.90\,\tiny{$\pm$\,0.02} \\
\bottomrule
\end{tabular}
\end{table}

To verify the scalability of our approach, we extend our evaluation to the large-scale ImageNet dataset using a ResNet-50 architecture. Following the experimental setting of \citet{foret2021SAM}, we train the model with a batch size of 1024, initial learning rate of 0.2, and basic augmentations. We set $\rho=0.2$ for IAM-S, $\rho=0.1$ for IAM-D, and $\rho=0.05$ for SAM. 
We report the best score achieved by SGD at either 200 epochs or 400 epochs.
As shown in Table~\ref{tab:imagnet}, IAM-D and IAM-S outperform the standard SGD baseline in both Top-1 and Top-5 error.
In particular, IAM-D outperforms the stronger SAM baseline.
See Table~\ref{tab:imagnet_full} in Appendix~\ref{app:supervised} for the results of each epochs.

While SAM relies on the gradient of the training loss, IAM derives perturbations via the KL divergence. 
The generalization performance of IAM-S empirically demonstrates that a single-step perturbation $\delta_1 \approx F(\theta)\varepsilon$ is not merely stochastic; rather, it effectively captures the principal eigenspace of the FIM with minimal computational effort ($K=1$).
This observation aligns with Explicit Jacobian Regularization \citep{lee2023implicit}, which showed that random noise becomes a meaningful perturbation when projected onto the Jacobian’s column space.
The same mechanism applies here: multiplying $\varepsilon$ by $F(\theta)$ weights each eigendirection by its eigenvalue, so a single step amplifies the dominant curvature directions while suppressing the rest.
\begin{table*}[t]
\centering
\caption{Test error (mean $\pm$ stderr) with semi-supervised setting on CIFAR-10 and CIFAR-100}
\label{tab:semi_cifar}
\small
\setlength{\tabcolsep}{6pt}
\begin{tabular}{lrrrr}
\toprule
 & \multicolumn{2}{c}{CIFAR-10} & \multicolumn{2}{c}{CIFAR-100} \\
\cmidrule(lr){2-3}\cmidrule(lr){4-5}
 & \multicolumn{1}{c}{250 labels} & \multicolumn{1}{c}{4000 labels} 
 & \multicolumn{1}{c}{2500 labels} & \multicolumn{1}{c}{10000 labels} \\
\midrule
SGD   & 63.82 {\tiny$\pm$ 0.18} & 22.45 {\tiny$\pm$ 0.40}
      & 68.91 {\tiny$\pm$ 0.43} & 45.94 {\tiny$\pm$ 0.35} \\
SAM   & 63.91 {\tiny$\pm$ 0.18} & 19.95 {\tiny$\pm$ 0.22}
      & 69.53 {\tiny$\pm$ 0.79} & 43.30 {\tiny$\pm$ 0.11} \\
IAM\text{-}D 
      & \textbf{61.77} {\tiny$\pm$ 0.09} & \textbf{15.07} {\tiny$\pm$ 0.14}
      & \textbf{66.98} {\tiny$\pm$ 0.01} & \textbf{40.02} {\tiny$\pm$ 0.13} \\
\midrule
FixMatch 
      & 6.26 {\tiny$\pm$ 0.39} & 4.10 {\tiny$\pm$ 0.17}
      & 32.84 {\tiny$\pm$ 0.40} & 22.93 {\tiny$\pm$ 0.05} \\
FixMatch + IAM\text{-}D 
      & \textbf{5.30} {\tiny$\pm$ 0.08} & \textbf{3.88} {\tiny$\pm$ 0.02}
      & \textbf{28.95} {\tiny$\pm$ 0.59} & \textbf{21.99} {\tiny$\pm$ 0.04} \\
\bottomrule
\end{tabular}
\end{table*}
\subsection{IAM for Learning with Limited or No Explicit Labels}
\label{sec:self}

A key advantage of local inconsistency is its computability from unlabeled data, making IAM well-suited for scenarios with limited or no explicit supervision.  We demonstrate this in semi-supervised and self-supervised learning settings. IAM-D can be seamlessly ``plugged in'' to complex pipelines like FixMatch \citep{sohn2020Fixmatch} or SimCLR \citep{chen2020simpleframeworkcontrastivelearning} by adding penalty term $\beta S_\rho(\theta)$ to the original objective. Detailed experimental settings are listed in Appendix~\ref{appendix:setting}.

\paragraph{Semi-Supervised Learning.}
We demonstrate the advantage of IAM in a label-scarce setting on CIFAR-\{10, 100\}. Our method, IAM-D, optimizes a joint objective: the standard cross-entropy loss on the labeled subset, plus the local inconsistency penalty computed over the entire mini-batch (both labeled and unlabeled samples). 
The results in Table~\ref{tab:semi_cifar} show that IAM-D consistently outperforms both SGD and SAM. Furthermore, to highlight its versatility, we integrated IAM-D into the strong FixMatch framework \citep{sohn2020Fixmatch}. 
This combination significantly lowers the test error both on CIFAR-10 and CIFAR-100, demonstrating that IAM-D can serve as an effective plug-and-play regularizer to enhance state-of-the-art semi-supervised learning methods.

This approach contrasts with methods like SAM, which can only promote flatness over the small, labeled subset. Simply applying SAM to labeled loss of FixMatch fails to improve generalization (see Appendix~\ref{app:fixmatchSAM}). A critical insight is that flatness measured on a sparse set of labeled points may not reflect true flatness across the entire data distribution. By leveraging second-order information from abundant unlabeled data, IAM-D seeks a more generalizable minimum.

\begin{figure}[htb]
  \centering
  \includegraphics[width=\linewidth]{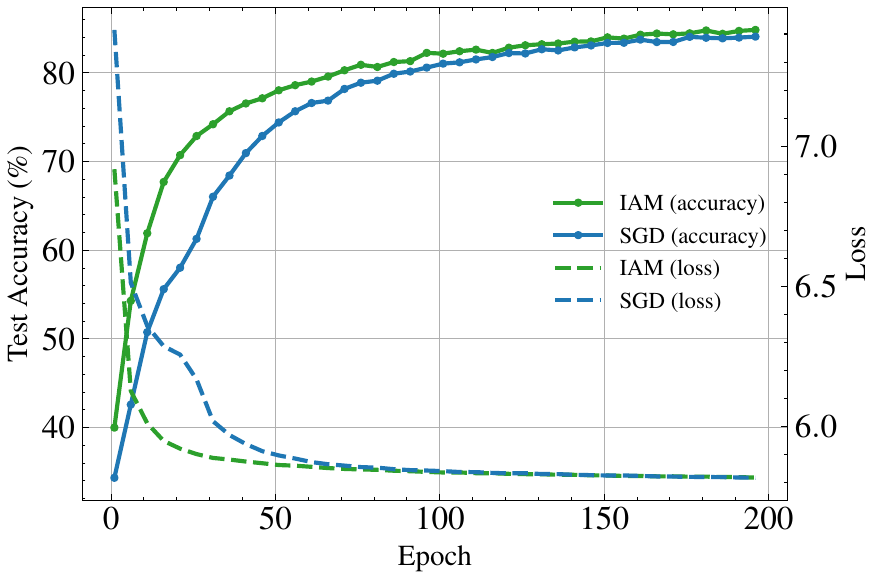}
  \caption{Test accuracy on linear probe and SimCLR training loss for ResNet-18 on CIFAR-10, comparing SimCLR trained with SGD (SimCLR-SGD) versus SimCLR with IAM-D (SimCLR-IAM).}
  \label{fig:self_supervised_eval}
\end{figure}

\paragraph{Self-Supervised Learning (SSL).}
The label-agnostic nature of IAM makes it directly applicable to SSL objectives. 
We integrated IAM-D into the SimCLR framework \citep{chen2020simpleframeworkcontrastivelearning}, training a ResNet-18 \citep{he2015deepresiduallearningimage} encoder on CIFAR-10. Performance was evaluated using linear probing. 
The local inconsistency term for IAM-D was computed using the model's projection-head outputs.
Figure \ref{fig:self_supervised_eval} shows that SimCLR trained with IAM-D (SimCLR-IAM) achieves higher test accuracy on the downstream linear classification task compared to vanilla SimCLR (SimCLR-SGD). 
Furthermore, SimCLR-IAM tends to converge faster in terms of test error and also minimizes the SimCLR training loss more rapidly, despite the additional local inconsistency regularization. 
This suggests that controlling local inconsistency is beneficial even when no explicit labels are available during representation learning.

\section{Conclusion}
In this work, we introduced ``local inconsistency,'' a novel information-geometric generalization measure computable from a single model using only unlabeled data. We theoretically linked it to the Fisher Information Matrix (FIM) and the loss Hessian. Empirically, local inconsistency correlates with the generalization gap and exhibits distinct characteristics from traditional sharpness-based metrics.

Based on this, we proposed Inconsistency-Aware Minimization (IAM), an optimization framework that directly incorporates local inconsistency into the training objective. IAM enhances generalization in supervised learning, matching or exceeding that of Sharpness-Aware Minimization (SAM). Crucially, IAM proves effective in semi- and self-supervised learning by leveraging unlabeled data for local inconsistency computation, improving performance in label-scarce settings.

These findings offer a practical and theoretically-grounded approach to improving model generalization, particularly valuable in real-world applications where labeled data is limited. Future research could focus on exploring the scalability and applicability of IAM to a wider array of modern model architectures and other tasks or on developing a computationally efficient version of IAM.

\section*{Impact Statement}
This paper presents work whose goal is to advance the field of Machine
Learning. There are many potential societal consequences of our work, none which we feel must be specifically highlighted here.

\section*{Acknowledgments}
We thank the anonymous reviewers for insightful reviews. This work was partially supported
by Institute of Information \& communications Technology Planning \& Evaluation (IITP) grants
(RS-2020-II201373, Artificial Intelligence Graduate School Program (Hanyang University); RS-2023-002206284, Artificial intelligence for prediction of structure-based protein interaction reflecting physicochemical principles); the BK21 FOUR (Fostering Outstanding Universities for Research) project; NRF2024S1A5C3A02043653, Socio-Technological Solutions for Bridging the AI Divide: A Blockchain and Federated Learning-Based AI Training Data Platform) and Korea Institute for Advanced Study (KIAS) grant funded by the Korean government (MSIT).
\bibliography{references}
\bibliographystyle{icml2026}

%%%%%%%%%%%%%%%%%%%%%%%%%%%%%%%%%%%%%%%%%%%%%%%%%%%%%%%%%%%%%%%%%%%%%%%%%%%%%%%
%%%%%%%%%%%%%%%%%%%%%%%%%%%%%%%%%%%%%%%%%%%%%%%%%%%%%%%%%%%%%%%%%%%%%%%%%%%%%%%
% APPENDIX
%%%%%%%%%%%%%%%%%%%%%%%%%%%%%%%%%%%%%%%%%%%%%%%%%%%%%%%%%%%%%%%%%%%%%%%%%%%%%%%
%%%%%%%%%%%%%%%%%%%%%%%%%%%%%%%%%%%%%%%%%%%%%%%%%%%%%%%%%%%%%%%%%%%%%%%%%%%%%%%
\newpage
\appendix
\onecolumn

\section{Proof of the FIM-based generalization bound}
\label{app:bounds-proofs}

We provide a self-contained derivation of the FIM-form bound stated in
Theorem~\ref{thm:fim-generalization}. Throughout, let $L_S(\theta)=\frac{1}{n}\sum_{i=1}^n \ell(f(x_i;\theta),y_i)$ be the empirical cross-entropy with
logits $z(x;\theta)\in\mathbb{R}^C$, probabilities $f(x;\theta)=\mathrm{softmax}(z)$, and $J(x;\theta):=\nabla_\theta z(x;\theta)\in\mathbb{R}^{C\times m}$.
$\|\cdot\|$ denotes the Euclidean norm for vectors and the spectral norm for matrices.
We write $H_S(\theta):=\nabla^2 L_S(\theta)$ and define the empirical Fisher
\begin{equation*}
F_S(\theta):=\frac{1}{n}\sum_{i=1}^n J_i^\top\!\big(\mathrm{diag}(f(x_i;\theta))-f(x_i;\theta) f(x_i;\theta)^\top\big)J_i,
\end{equation*}
where $J_i:=J(x_i;\theta)$.

\paragraph{Assumption (near interpolation).}
There exists $\varepsilon_R\ge 0$ such that the residual
\begin{equation*}
    R_S(\theta):=\frac{1}{n}\sum_{i=1}^n\sum_{k=1}^C (f(x_i;\theta)-y_i)_k\,\nabla_\theta^2 z_k(x_i;\theta)
\quad\text{satisfies}\quad \|R_S(\theta)\|\le \varepsilon_R.
\tag{A1}
\label{assump:near-interp-app}
\end{equation*}

\subsection*{Step 1: Hessian--FIM decomposition for softmax--CE}
\begin{lemma}[Gauss--Newton (=FIM) + residual]
\label{lem:gn-decomp}
For each sample $i$, with loss $\ell_i:=\ell(f(x_i;\theta),y_i)$,
\begin{equation*}
\nabla_\theta \ell_i = J_i^\top (f(x_i;\theta)-y_i)
\end{equation*}
\begin{equation*}
\nabla_\theta^2 \ell_i
= J_i^\top\!\big(\mathrm{diag}(f(x_i;\theta))-f(x_i;\theta) f(x_i;\theta)^\top\big)J_i
+ \sum_{k=1}^C (f(x_i;\theta)-y_i)_k \,\nabla_\theta^2 z_k(x_i;\theta).
\end{equation*}
Averaging over $i$ yields
$H_S(\theta)=F_S(\theta)+R_S(\theta)$.
\end{lemma}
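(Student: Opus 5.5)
The plan is a direct chain-rule computation on the composition $\theta \mapsto z_i = z(x_i;\theta) \mapsto \ell_i = -\log \mathrm{softmax}(z_i)_{y_i}$. Throughout, $y_i$ is identified with its one-hot vector in $\mathbb{R}^C$.

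First we compute the derivatives of the cross-entropy with respect to the logits. Writing $\ell_i = -z_{i,y_i} + \log\sum_k e^{z_{i,k}}$, differentiation gives
\begin{equation*}
\nabla_z \ell_i = \mathrm{softmax}(z_i) - y_i = f(x_i;\theta) - y_i .
\end{equation*}
Differentiating once more, the term $-z_{i,y_i}$ is linear and contributes nothing. The log-sum-exp term has Hessian $\mathrm{diag}(f) - f f^\top$, since $\partial f_k/\partial z_j = f_k(\mathbb{1}[k=j] - f_j)$. So $\nabla_z^2 \ell_i = \mathrm{diag}(f(x_i;\theta)) - f(x_i;\theta) f(x_i;\theta)^\top$, which matches the expression already quoted in Section~\ref{sec:background}.

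Second, the first-order chain rule gives $\nabla_\theta \ell_i = J_i^\top \nabla_z \ell_i = J_i^\top (f(x_i;\theta) - y_i)$, since $J_i = \nabla_\theta z_i \in \mathbb{R}^{C\times m}$. For the Hessian, write $\nabla_\theta \ell_i = \sum_k (\nabla_z \ell_i)_k \nabla_\theta z_k$ and differentiate each summand with the product rule. The derivative of the coefficient $(\nabla_z\ell_i)_k$ is $\sum_j (\nabla_z^2\ell_i)_{kj}\nabla_\theta z_j$; summed over $k$, this yields the Gauss--Newton term $J_i^\top (\nabla_z^2\ell_i) J_i$. The derivative of $\nabla_\theta z_k$ gives $(\nabla_z\ell_i)_k \nabla_\theta^2 z_k = (f-y_i)_k\nabla^2_\theta z_k(x_i;\theta)$. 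Together these two pieces give the displayed formula for $\nabla_\theta^2\ell_i$.

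Finally, we average over $i=1,\dots,n$ and use linearity of $\nabla^2_\theta$ on $L_S = \frac1n\sum_i \ell_i$. The averaged Gauss--Newton term is exactly $F_S(\theta)$ as defined above, and the averaged residual is $R_S(\theta)$ from (A1). Hence $H_S = F_S + R_S$.

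There is no real obstacle here. The only care needed is with index conventions: $J_i$ must be taken as $C\times m$ so that $J_i^\top(\cdot)J_i$ is $m\times m$, and the symmetry of the product-rule term should be checked. We also assume $z$ is twice differentiable in $\theta$, which is implicit in the statement.
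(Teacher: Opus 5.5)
Your proposal is correct and follows the paper's proof essentially step for step. You compute $\nabla_z \ell_i = f(x_i;\theta)-y_i$, apply the chain rule for the gradient, differentiate again to get the Gauss--Newton term $J_i^\top(\mathrm{diag}(f)-ff^\top)J_i$ plus the residual $\sum_k (f-y_i)_k \nabla_\theta^2 z_k$, and average over $i$.

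The only cosmetic difference is that you write the loss as $-z_{i,y_i}+\log\sum_k e^{z_{i,k}}$, which assumes one-hot labels, whereas the paper uses $-\sum_k y_k\log p_k$. Your computation carries over verbatim to any label vector with $\sum_k y_k = 1$, since then $\ell_i = -y_i^\top z_i + \log\sum_k e^{z_{i,k}}$. This covers the label-smoothed targets used in the paper's experiments.
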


\begin{proof}
Since $\ell(p,y)=-\sum_{k} y_k\log p_k$ and $p=\mathrm{softmax}(z)$,
$\frac{\partial \ell}{\partial z} = p-y$.
By the chain rule, $\nabla_\theta \ell_i = J_i^\top (f(x_i;\theta)-y_i)$.
Differentiating once more,
\[
\nabla_\theta^2 \ell_i
= J_i^\top \Big(\frac{\partial f(x_i;\theta)}{\partial z_i}\Big) J_i
+ \sum_{k=1}^C \Big(\frac{\partial \ell_i}{\partial z_{ik}}\Big)\nabla_\theta^2 z_k(x_i,\theta),
\]
and $\frac{\partial f(x_i;\theta)}{\partial z_i}=\mathrm{diag}(f(x_i;\theta))-f(x_i;\theta) f(x_i;\theta)^\top$ for softmax. Using
$\frac{\partial \ell_i}{\partial z_{ik}}=(f(x_i;\theta)-y_i)_k$ gives the stated identity. Averaging over $i$ completes the proof.
\end{proof}

\subsection*{Step 2: Spectral control via Weyl’s inequality}
\begin{lemma}[Hessian vs.\ FIM eigenvalues]
\label{lem:weyl}
If $H_S=F_S+R_S$ with $F_S,R_S$ symmetric, then
\[
\lambda_{\max}(H_S)\le \lambda_{\max}(F_S)+\|R_S\|.
\]
\end{lemma}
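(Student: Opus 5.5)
The plan is to use the variational (Rayleigh quotient) characterization of the largest eigenvalue of a symmetric matrix, combined with the subadditivity of the maximum. Since $H_S$, $F_S$ and $R_S$ are symmetric, and $F_S$ is moreover positive semidefinite by its definition as an average of $J_i^\top(\mathrm{diag}(f)-ff^\top)J_i$, we have
\[
\lambda_{\max}(H_S)=\max_{\|v\|=1} v^\top H_S v .
\]
Lemma~\ref{lem:gn-decomp} gives $H_S=F_S+R_S$, so for any unit vector $v$,
\[
v^\top H_S v = v^\top F_S v + v^\top R_S v .
\]

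Each term is bounded separately. For the first term, $v^\top F_S v\le \lambda_{\max}(F_S)$ for every unit $v$, again by the Rayleigh characterization. For the second term, Cauchy--Schwarz and the definition of the spectral norm give
\[
v^\top R_S v\le \|v\|\,\|R_S v\|\le \|R_S\| .
\]
Taking the maximum over unit $v$ of the sum, and noting that the maximum of a sum is at most the sum of the maxima, yields
\[
\lambda_{\max}(H_S)\le \lambda_{\max}(F_S)+\|R_S\| .
\]
This is exactly Weyl's inequality $\lambda_{\max}(A+B)\le\lambda_{\max}(A)+\lambda_{\max}(B)$, combined with $\lambda_{\max}(R_S)\le\|R_S\|$, which holds for symmetric $R_S$ because its spectral norm equals its largest absolute eigenvalue.

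There is no real obstacle here. The only points to check are:
\begin{itemize}
\item $R_S$ is symmetric, which holds because each $\nabla_\theta^2 z_k$ is a Hessian and hence symmetric, and symmetry is preserved by real linear combinations;
\item the norm used is the spectral norm, matching the convention fixed at the start of the appendix.
\end{itemize}
Under assumption~\eqref{assump:near-interp-app} the result then specializes to $\lambda_{\max}(H_S)\le\lambda_{\max}(F_S)+\varepsilon_R$, which is the form needed to substitute into the Hessian-based bound of \citet{luo2024explicit}.
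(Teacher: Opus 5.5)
Your proof is correct and follows the paper's route: the paper gives no argument beyond invoking Weyl's inequality, and your Rayleigh-quotient derivation combined with $\lambda_{\max}(R_S)\le\|R_S\|$ is the standard proof of that inequality for the top eigenvalue. The positive semidefiniteness of $F_S$ that you mention is not needed, since only the symmetry of $F_S$ and $R_S$ enters the argument.
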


Combining Lemma~\ref{lem:gn-decomp} with Assumption~\eqref{assump:near-interp-app} and Lemma~\ref{lem:weyl} gives
\begin{equation}
\label{eq:weyl-app}
\lambda_{\max}\!\big(H_S(\theta)\big)\le \lambda_{\max}\!\big(F_S(\theta)\big)+\varepsilon_R.
\end{equation}

\subsection*{Step 3: From the Hessian-based bound to the FIM form}
We recall the Hessian-based bound of \citet{luo2024explicit} (Theorem~3.1) under the assumption that the loss function is bounded by $L$, the third-order partial derivative of the loss function is bounded by $C$, and $L_\mathscr{D}(\theta) \leq \mathbb{E}_{\varepsilon\sim \mathcal{N}(0, \sigma^2I_m)}L_\mathscr{D}(\theta+\varepsilon)$.
\begin{align}
\label{eq:luo-thm}
L_\mathscr{D}(\theta)
~&\le~
L_S(\theta)
+\frac{m\,\sigma^2}{2}\,\lambda_{\max}\!\big(H_S(\theta)\big)+\frac{Cm^3\sigma^3}{6}\\
&+\frac{L}{2\sqrt{n}}\sqrt{m\log \bigl( 1+\frac{\|\theta\|^2}{\rho^2}\bigr)+2\log\frac{1}{\xi}+4\log(n+m)+O(1)}.
\end{align}

\begin{theorem}[FIM-based generalization bound; Theorem~\ref{thm:fim-generalization}]
\label{thm:fim-bound-app}
Assume that the loss function is bounded by $L$, the third-order partial derivative of the loss function is bounded by $C$, and $L_\mathscr{D}(\theta) \leq \mathbb{E}_{\varepsilon\sim \mathcal{N}(0, \sigma^2I_m)}L_\mathscr{D}(\theta+\varepsilon)$. For any $\xi \in (0, 1)$ and $\rho > 0$, with probability at least $1-\xi$ over the choice of $S \sim \mathscr{D}$, we have 
\begin{align}
L_\mathscr{D}(\theta)
~&\le~
L_S(\theta)
+\frac{\rho^2}{2}\Big(\lambda_{\max}\!\big(F_S(\theta)\big)+\varepsilon_R\Big) +\tfrac{Cm^{3/2}\rho^3}{6}\nonumber\\
&+\frac{L}{2\sqrt{n}}\sqrt{m\log \bigl( 1+\frac{\|\theta\|^2}{\rho^2}\bigr)+2\log\frac{1}{\xi}+4\log(n+m)+O(1)},
\end{align}
where $n$ is the number of samples and $\rho=\sqrt{m}\sigma$.
\end{theorem}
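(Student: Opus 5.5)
The plan is to start from the Hessian-based bound \eqref{eq:luo-thm} of \citet{luo2024explicit}, which holds with probability at least $1-\xi$ under exactly the stated assumptions. Its three assumptions are bounded loss by $L$, third derivatives bounded by $C$, and $L_\mathscr{D}(\theta)\le\mathbb{E}_{\varepsilon}L_\mathscr{D}(\theta+\varepsilon)$. We then rewrite its terms using the reparametrization $\rho=\sqrt{m}\,\sigma$. No new probabilistic argument is needed: the high-probability event is the one furnished by \eqref{eq:luo-thm}, and everything else is a deterministic inequality that holds on that event.

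\textbf{Step 1: curvature term.} With $\sigma^2=\rho^2/m$, the term $\frac{m\sigma^2}{2}\lambda_{\max}(H_S(\theta))$ becomes $\frac{\rho^2}{2}\lambda_{\max}(H_S(\theta))$. We then invoke Lemma~\ref{lem:gn-decomp}, which gives $H_S=F_S+R_S$ with both matrices symmetric. Together with Lemma~\ref{lem:weyl} (Weyl's inequality, i.e.\ $\lambda_{\max}(A+B)\le\lambda_{\max}(A)+\lambda_{\max}(B)\le\lambda_{\max}(A)+\|B\|$) and Assumption~\eqref{assump:near-interp-app}, this yields \eqref{eq:weyl-app}. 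Since $\rho^2/2>0$, multiplying preserves the inequality, so the curvature term is at most $\frac{\rho^2}{2}\big(\lambda_{\max}(F_S(\theta))+\varepsilon_R\big)$. If Lemma~\ref{lem:weyl} is not taken as given, I would prove it in one line via the Rayleigh quotient: $v^\top H_S v=v^\top F_S v+v^\top R_S v\le\lambda_{\max}(F_S)+\|R_S\|$ for unit $v$, and then take the supremum over $v$.

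\textbf{Step 2: cubic remainder term.} The quantity $\frac{Cm^3\sigma^3}{6}$ must be rewritten in terms of $\rho$. Substituting $\sigma=\rho/\sqrt{m}$ gives $\frac{Cm^{3/2}\rho^3}{6}$, which matches the claimed term. I expect this to be the main point to check carefully. The power of $m$ in Luo et al.'s remainder depends on how the Taylor remainder $\frac{C}{6}\mathbb{E}\|\varepsilon\|_1^3$ (or a similar norm) is bounded, and the exponent $m^{3/2}$ here depends on that original term being $Cm^3\sigma^3/6$ as displayed in \eqref{eq:luo-thm}. I would confirm this by bounding $\mathbb{E}|\sum_{ijk}\partial^3\ell\,\varepsilon_i\varepsilon_j\varepsilon_k|\le C\,\mathbb{E}\|\varepsilon\|_1^3\le C m^3\sigma^3$ up to constants absorbed in $C$, and then substitute.

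\textbf{Step 3: complexity term.} The square-root term is carried over verbatim from \eqref{eq:luo-thm}, since its $\rho$ is the prior radius appearing in Luo et al.'s covering/PAC-Bayes argument. It is identified with our $\rho$ by the choice $\rho=\sqrt{m}\sigma$; I would check that this is consistent with their construction, where the prior variance is set to match the posterior scale. That term is $O(1/\sqrt n)$ for fixed $m$, $\|\theta\|$ and $\xi$, which gives the compact form stated in Theorem~\ref{thm:fim-generalization}. Combining the three steps on the same probability-$(1-\xi)$ event completes the proof.
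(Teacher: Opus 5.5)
Your proposal is correct and follows the paper's proof. The paper also substitutes the Weyl-based inequality $\lambda_{\max}(H_S(\theta))\le\lambda_{\max}(F_S(\theta))+\varepsilon_R$ (from the Gauss--Newton decomposition and Assumption~(A1)) into the quoted bound of \citet{luo2024explicit}, with $\sigma=\rho/\sqrt{m}$ turning $m\sigma^2/2$ into $\rho^2/2$ and $Cm^3\sigma^3/6$ into $Cm^{3/2}\rho^3/6$ on the same probability-$(1-\xi)$ event. Your Step~2 re-derivation of the cubic term is unnecessary because that bound is used as a black box; also, you could not absorb extra constants into $C$, since $C$ is fixed by hypothesis.
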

In particular, at (near) interpolation where $\varepsilon_R\approx 0$ ($L_S(\theta) \approx 0$), the residual vanishes and the curvature term reduces to $\lambda_{\max}(F_S(\theta))$ without degradation.
\begin{proof}
Substitute \eqref{eq:weyl-app} into \eqref{eq:luo-thm}.
\end{proof}

\section{Relation between Local Inconsistency and Inconsistency}
\label{appendix:relation_incons}

This section outlines an approximate derivation relating the model output inconsistency $\mathcal{C}_P$, as defined by \citet{johnson2023inconsistencyinstabilitygeneralizationgap}, to the local sensitivity metric $S_\rho(\theta)$ defined previously. we first show simple demonstrations that these two metrics are related primarily  through the Fisher Information Matrix (FIM), under specific assumptions like isotropic covariance. We then show results with anisotropic covariance.

\paragraph{Definitions}

\begin{itemize}
    \item \textbf{Inconsistency ($\mathcal{C}_P$)}: Measures the average difference (in terms of KL divergence) between the outputs of models generated by a stochastic training procedure $P$ applied to the same training data $Z_n$. The average is taken over draws of the training data $Z_n$ and pairs of models $(\Theta, \Theta')$ drawn from the conditional distribution $\Theta_{P|Z_n}$.
    \[
        \mathcal{C}_P = \mathbb{E}_{Z_n} \mathbb{E}_{\theta, \theta' \sim \Theta_{P|Z_n}} \mathbb{E}_{x} [\mathrm{KL}(f(x;\theta) \| f(x;\theta'))].
    \]
    Here, $\Theta_{P|Z_n}$ denotes the distribution over parameters resulting from applying procedure $P$ to dataset $Z_n$.

    \item \textbf{Local Inconsistency ($S_\rho(\theta)$)}: Measures the expected maximum change in the model's output distribution within a $\rho$-radius ball around a specific parameter vector $w$. For consistency with the derivation below, we use the form where the expectation is inside the maximization.
    \[
        S_\rho(\theta) = \max_{\|\delta\| \le \rho} \mathbb{E}_x [\mathrm{KL}(f(x; \theta) \| f(x; \theta+\delta))].
    \]
    Here, $\delta \in \mathbb{R}^m$ is a perturbation to the parameters $w$.
\end{itemize}

\paragraph{Assumptions}
The following derivation relies on several key assumptions:
\begin{enumerate}
     \item \textbf{Isotropic Covariance Posterior Assumption}: For a given training set $Z_n$, the conditional parameter distribution $\Theta_{P|Z_n}$ can be approximated by an isotropic distribution centered at a specific parameter vector $\theta_{Z_n}$ derived from $Z_n$: $\mathbb{E}[\Theta_{P|Z_n}] = \theta_{Z_n}, \mathrm{Cov}[\Theta_{P|Z_n}]= s^2 I_m$, where $s^2$ is a small variance. This approximation is motivated by studies interpreting Stochastic Gradient Descent (SGD) as a form of approximate Bayesian inference, where the distribution of parameters after training can resemble a Gaussian centered near a mode of a posterior distribution related to the loss function \cite{mandt2018stochasticgradientdescentapproximate}.
    
    \item \textbf{Validity of Second-Order KL Approximation}: The KL divergence between outputs of models with slightly different parameters can be accurately approximated by a quadratic form involving the Fisher Information Matrix (FIM). This relies on the parameter difference being small, implying $s^2$ must be small.
    
    \item \textbf{Effective FIM Constancy in Expectation}: The variations of the FIM $F(\theta')$ for $\theta' \sim \mathcal{N}(\theta_{Z_n}, s^2 I_m)$ around $F(\theta_{Z_n})$ are assumed to average out sufficiently within the expectation required to calculate $\mathcal{C}_{P|Z_n}$. This allows the approximation $\mathcal{C}_{P|Z_n} \approx s^2 \mathrm{Tr}(F(\theta_{Z_n}))$. 
\end{enumerate}

\paragraph{Approximation of $\mathcal{C}_P$}

We first consider the conditional inconsistency for a fixed $Z_n$, denoted $\mathcal{C}_{P|Z_n}$, by removing the outer expectation $\mathbb{E}_{Z_n}$:
\[
    \mathcal{C}_{P|Z_n} = \mathbb{E}_{\theta, \theta' \sim \Theta_{P|Z_n}} \mathbb{E}_{x} [\mathrm{KL}(f(\theta, x) \| f(\theta', x))]
\]
Applying the isotropic covariance posterior assumption, $\theta = \theta_{Z_n} + \delta$ and $\theta' = \theta_{Z_n} + \delta'$, where $\delta, \delta'$ are independent perturbations ($\mathbb{E}[\delta]=\mathbb{E}[\delta']=0, \mathrm{Cov}[\delta]=\mathrm{Cov}[\delta']=s^2 I_m$).
\[
    \mathcal{C}_{P|Z_n} \approx \mathbb{E}_{\delta, \delta'} \mathbb{E}_{x} [\mathrm{KL}(f(\theta_{Z_n} + \delta, x) \| f(\theta_{Z_n} + \delta', x))]
\]
Using the second-order Taylor expansion for KL divergence taking the expectation over $x$, valid for small $\|\delta - \delta'\|$ (i.e., small $s^2$):
\[
    \mathbb{E}_{x}[\mathrm{KL}(f(\theta_{Z_n} + \delta, x) \| f(\theta_{Z_n} + \delta', x))] 
    = \frac{1}{2} (\delta - \delta')^T F(\theta_{Z_n} +\delta') (\delta - \delta') + O(\|\delta\|^3)
\]
Let $u = \theta - \theta' = \delta - \delta'$. Since $\delta, \delta'$ are independent, $u \sim \mathcal{N}(0, 2s^2 I_m)$. Substituting this into the expression for $\mathcal{C}_{P|Z_n}$:
\begin{align*}
    \mathcal{C}_{P|Z_n} &= \mathbb{E}_{u} \left[ \frac{1}{2} u^T F(\theta') u \right] + O(\|\delta\|^3) \\
    &= \mathbb{E}_{u} \left[ \frac{1}{2} u^T F(\theta_{Z_n}) u \right]+ O(\|\delta\|^3) \quad (\text{FIM Constancy in Expectation Assumption}) \\
    &= \frac{1}{2} \mathrm{Tr}(\mathrm{Cov}(u) F(\theta_{Z_n})) + \frac{1}{2} \mathbb{E}[u]^T F(\theta_{Z_n}) \mathbb{E}[u] + O(\|\delta\|^3) \\
    &= \frac{1}{2} \mathrm{Tr}(2s^2 I_m F(\theta_{Z_n})) + 0 +O(\|\delta\|^3) \quad ( \mathbb{E}[u] = 0) \\
    &\approx s^2 \mathrm{Tr}(F(\theta_{Z_n}))
\end{align*}
Thus, the conditional inconsistency for a fixed $Z_n$ is approximately proportional to the trace of the FIM evaluated at $\theta_{Z_n}$:
\begin{equation}
    \mathcal{C}_{P|Z_n} \approx s^2 \mathrm{Tr}(F(\theta_{Z_n})) \label{eq:cp_approx_eng}
\end{equation}
The overall inconsistency $\mathcal{C}_P$ is the expectation of this quantity over $Z_n$: $\mathcal{C}_P \approx \mathbb{E}_{Z_n}[s^2 \mathrm{Tr}(F(\theta_{Z_n}))]$.

\paragraph{Approximation of $S_\rho(\theta_{Z_n})$}

Applying the same second-order KL approximation to the definition of $S_\rho(\theta_{Z_n})$:
\begin{align*}
    S_\rho(\theta_{Z_n}) &= \max_{\|\delta\| \le \rho} \frac{1}{2}\delta^\top F(\theta_{Z_n})\delta + O(\|\delta\|^3)\\
\end{align*}
The maximum value of the quadratic form $\delta^T A \delta$ for a positive semi-definite matrix $A$ subject to $\|\delta\| \le \rho$ is achieved when $\delta$ is aligned with the eigenvector corresponding to the largest eigenvalue ($\lambda_{\max}(A)$) and has norm $\rho$. Thus:
\begin{equation}
    S_\rho(\theta_{Z_n}) = \frac{1}{2} \rho^2 \lambda_{\max}(F(\theta_{Z_n})) \label{eq:srho_approx_eng}
\end{equation}
This shows that the local sensitivity $S_\rho$ is approximately proportional to the largest eigenvalue of the FIM.

\paragraph{Connecting $\mathcal{C}_{P|Z_n}$ and $S_\rho(\theta_{Z_n})$}

For a $m \times m$ positive semi-definite matrix $A$, the relationship between its trace and largest eigenvalue is given by $\frac{1}{m} \mathrm{Tr}(A) \le \lambda_{\max}(A) \le \mathrm{Tr}(A)$. Applying this to the FIM $F(\theta_{Z_n})$:
\[
    \frac{1}{m} \mathrm{Tr}(F(\theta_{Z_n})) \le \lambda_{\max}(F(\theta_{Z_n})) \le \mathrm{Tr}(F(\theta_{Z_n}))
\]
Substituting this into the approximation for $S_\rho(\theta_{Z_n})$ from Eq. \eqref{eq:srho_approx_eng}:
\[
    \frac{\rho^2}{2m} \mathrm{Tr}(F(\theta_{Z_n}))  \le S_\rho(\theta_{Z_n}) \le \frac{\rho^2}{2} \mathrm{Tr}(F(\theta_{Z_n}))
\]
Let's assume a plausible connection, for instance, $s^2 = \rho^2 / m$. Substituting this into the approximation for $\mathcal{C}_{P|Z_n}$ from Eq. (\ref{eq:cp_approx_eng}), we get $\mathcal{C}_{P|Z_n} \approx \frac{\rho^2}{m} \mathrm{Tr}(F(\theta_{Z_n}))$.
Combining this with the bounds for $S_\rho(\theta_{Z_n})$:
\[
    \frac{1}{2} \left( \frac{\rho^2}{m} \mathrm{Tr}(F(\theta_{Z_n})) \right) \le S_\rho(\theta_{Z_n}) \le \frac{m}{2} \left( \frac{\rho^2}{m} \mathrm{Tr}(F(\theta_{Z_n})) \right).
\]
This leads to the final approximate relationship between the conditional inconsistency (for a fixed $Z_n$) and the local sensitivity (at the corresponding $\theta_{Z_n}$):
\begin{equation}
    \frac{1}{2} \mathcal{C}_{P|Z_n} \le S_\rho(\theta_{Z_n}) \le \frac{m}{2} \mathcal{C}_{P|Z_n} \label{eq:local_global_relation}
\end{equation}
This result suggests that, under the stated assumptions, the conditional inconsistency $\mathcal{C}_{P|Z_n}$ and the local sensitivity $S_\rho(\theta_{Z_n})$ are approximately proportional, with the proportionality factor potentially depending on the parameter dimension $m$.

\paragraph{Anisotropic covariance}
Let $\mathrm{Cov}[\theta_{P|Z_n}]=s^2\Sigma$, where $s^2=\frac{\rho^2}{m}$.
Starting from $\mathcal{C}_{P|Z_n} = \frac{1}{2} \mathrm{Tr}(\mathbf{\Sigma} F(\theta_{Z_n}))$, 
$$\lambda_{min}(\Sigma)\mathrm{Tr}(F) \le \mathrm{Tr}(\Sigma F) \le \lambda_{\max}(\Sigma)\mathrm{Tr}(F)$$
$$\lambda_{min}(\Sigma)\lambda_{\max}(F) \le \mathrm{Tr}(\Sigma F) \le \lambda_{\max}(\Sigma)m\lambda_{\max}(F)$$
$$\frac{\rho^2}{2m\lambda_{\max}(\Sigma)}\mathrm{Tr}(\Sigma F)\le\frac{\rho^2}{2}\lambda_{\max}(F) \le \frac{\rho^2}{2\lambda_{min}(\Sigma)}\mathrm{Tr}(\Sigma F)$$
$$\frac{1}{\lambda_{\max}(\Sigma)}\mathcal{C}_{P|Z_n} \le S_{\rho}(\theta_{Z_n}) \le \frac{m}{\lambda_{min}(\Sigma)}\mathcal{C}_{P|Z_n}$$

\paragraph{Practical Considerations: Eigenvalue Spectrum of Neural Networks}
In practice, for deep learning models, the FIM often exhibits a sparse eigenvalue spectrum: many eigenvalues are close to zero, and only a few are significantly large. In such cases:
\begin{itemize}
    \item The trace $\mathrm{Tr}(F) = \sum \lambda_i$ is dominated by the sum of the few large eigenvalues.
    \item The ratio $\lambda_{\max}(F) / \mathrm{Tr}(F)$ might be closer to $1/m'$ than $1/d$, where $m' \ll m$ is the ``effective rank'' or number of dominant eigenvalues.
\end{itemize}
This implies that the bounds relating $\lambda_{\max}(F)$ and $\mathrm{Tr}(F)$ might be tighter than the general $1/m$ and $1$ factors suggest. Consequently, the relationship between $\mathcal{C}_{P|Z_n}$ (related to trace) and $S_\rho$ (related to max eigenvalue) could be closer to direct proportionality than Eq. \eqref{eq:final_relation_eng} indicates, especially if $s^2$ is appropriately related to $\rho^2$.

\paragraph{Summary and Limitations}
This analysis provides a heuristic argument suggesting a connection between conditional inconsistency $\mathcal{C}_{P|Z_n}$ and local sensitivity $S_\rho(\theta_{Z_n})$. Under assumptions of a Gaussian posterior, small variance $s^2$, validity of second-order KL approximations, local FIM constancy, and a specific link between $s^2$ and $\rho^2$ (e.g., $s^2=\rho^2/m$), we find that $S_\rho(\theta_{Z_n})$ is approximately proportional to $\mathcal{C}_{P|Z_n}$, potentially up to a factor related to dimension $m$. This connection is mediated by the trace and the maximum eigenvalue of the Fisher Information Matrix. The practical observation of sparse FIM eigenvalues might strengthen this relationship.

\newpage
\section{Decision boundary of neural networks and principal eigenspace of FIM}
\label{app:Decision_boundary_example}
To intuitively analyze the role of $\delta_1$ in the training of a neural network, we conducted experiments using a 3-layer fully-connected neural network on two-dimensional synthetic data. The data is generated from a mixture of three Gaussian distributions, a setup analogous to that employed by \cite{jang2022reparametrization} in their investigation of the characteristics of the FIM eigensubspace. Their work demonstrated that perturbing parameters along the principal eigenvectors of the FIM can lead to significant modifications in the decision boundary, such as increasing or decreasing the margins of specific classes.

\begin{figure}[!htbp]
    \centering
    \begin{subfigure}[b]{0.24\textwidth}
        \centering
        \includegraphics[width=\linewidth]{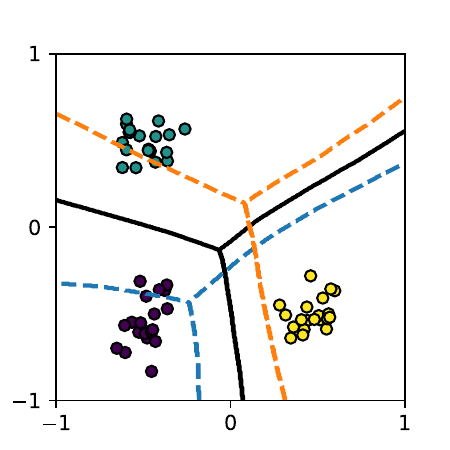}
        \caption{Decision boundary perturbed by $\delta_1$ from $\varepsilon_1$}
    \end{subfigure}
    \hfill
    \begin{subfigure}[b]{0.24\textwidth}
        \centering
        \includegraphics[width=\linewidth]{fig/toy_perturbation.pdf}
        \caption{Decision boundary perturbed by $\delta_1$ from $\varepsilon_2$}
    \end{subfigure}
    \hfill
    \begin{subfigure}[b]{0.24\textwidth}
        \centering
        \includegraphics[width=\linewidth]{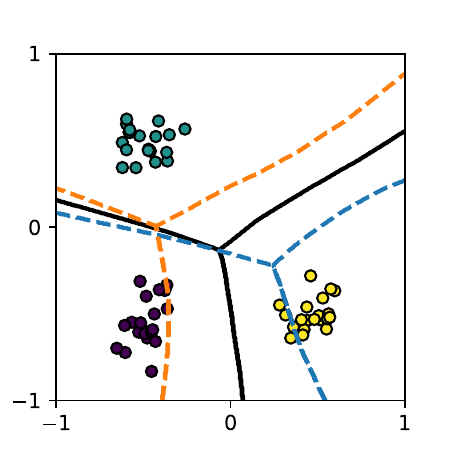}
        \caption{Decision boundary perturbed by $\delta_1$ from $\varepsilon_3$}
    \end{subfigure}
    \hfill
    \begin{subfigure}[b]{0.24\textwidth}
        \centering
        \includegraphics[width=\linewidth]{fig/toy_perturbation_w_noise.pdf}
        \caption{Decision boundary perturbed by $\varepsilon$}
    \end{subfigure}
    \caption{A synthetic classification example. The black, blue, orange lines correspond to decision boundaries of the NN with trained parameter values, and parameter values perturbed by $\delta_1$. Each plot use different noise.}
    \label{fig:decision_boundary}
\end{figure}

Our investigation focuses on whether $\delta_1$, despite being derived from only a single gradient step (as described in Algorithm 1) and thus influenced by an initial random noise vector $\varepsilon$, still induces substantial changes in the neural network's decision boundary. 
Figure \ref{fig:decision_boundary} visualizes these effects. The black lines in each subfigure depict the original decision boundary obtained with the trained parameters $w$.  Figure \ref{fig:decision_boundary} (a-c) show the perturbed decision boundaries (blue and orange lines) when distinct $\pm\delta_1$ with $\rho = 0.5$ is added to $w$. Each of these $\delta_1$ vectors was computed using a different random initialization noise vector, denoted as $\varepsilon_1$, $\varepsilon_2$, and $\varepsilon_3$, respectively. 
For a direct comparison of the perturbation's nature, Figure \ref{fig:decision_boundary}(d) illustrates the decision boundary perturbed by directly adding the random noise vector $\varepsilon$ to $w$. This vector $\varepsilon$ is sampled from the same distribution as initial vectors (e.g.$\varepsilon_1$) and is scaled to $\|\varepsilon\| =\rho$ the same as $\delta_1$. 
As observed in Figure \ref{fig:decision_boundary} (d), direct perturbation with such an arbitrary random noise vector does not meaningfully alter the decision boundary, even when its norm is equivalent to that of the $\delta_1$. This is sharply opposed to the significant changes induced by $\delta_1$ perturbations shown in Figures \ref{fig:decision_boundary} (a-c), underscoring that the direction derived by Algorithm 1, even in a single step, is substantially more influential than arbitrary noise of the same magnitude.
This result intuitively suggests that the perturbation $\delta_1$ with single gradient step is still meaningful and aligned with principal eigenvectors of FIM.

To investigate the alignment between the single-step perturbation vector $\delta_1$ and the principal eigenspace of the FIM, we explicitly compute the FIM and its top three eigenvectors $v_1$, $v_2$, $v_3$, corresponding to the largest eigenvalues $\lambda_1>\lambda_2>\lambda_3$. The perturbation $\delta_1$ results from one normalized gradient ascent step on the KL divergence objective, starting from an initial random noise $\varepsilon$. In the language of power iteration, $\delta_1$ before normalization sums the FIM eigenvectors weighted by $\lambda_i\alpha_i$. Formally, write $\varepsilon$ in the eigenbasis of $F(w)$ as $\varepsilon=\sum_i^m\alpha_i v_i$ with $\varepsilon\sim\mathcal{N}(0,\sigma^2\mathrm{I}_m)$; the coefficients $\alpha_i$ are i.i.d.\ $\mathcal{N}(0,\sigma^2)$ because $\{v_i\}$ form an orthonormal basis.
\begin{align*}
    F(\theta)\varepsilon &= \sum_i^m\lambda_i v_i v_i^\top\sum_i^m\alpha_i v_i\\
                         &= \sum_i^m\lambda_i\alpha_i v_i
\end{align*}
The cosine similarity between $\delta_1$ and $v_i$ is therefore $\lambda_i\alpha_i$, and the projection onto the principal subspace, $\frac{\|\sum_i^3\delta_1^\top v_i\|}{\|\delta_1\|}$, equals $\frac{\|\sum_i^3\alpha_i\lambda_i v_i\|}{\|\delta\|}$.
 
\begin{figure}[!htbp]
    \centering
    \begin{subfigure}[b]{0.43\linewidth}
        \centering
        \includegraphics[width=\linewidth]{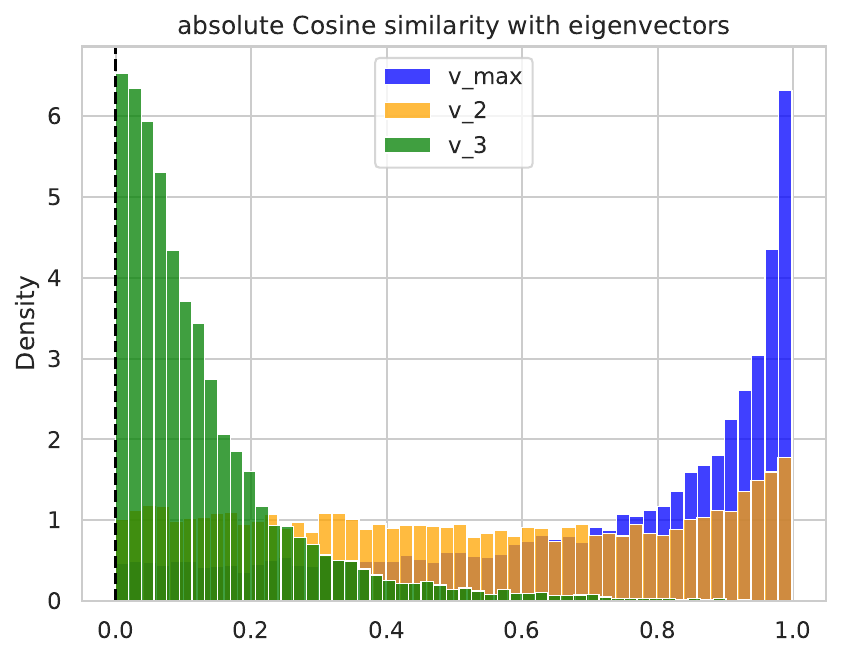}
        \caption{cosine similarity between $\delta_1$ and $v_i, i\in\{1,2,3\}$}
    \end{subfigure}
    \hfill
    \begin{subfigure}[b]{0.43\linewidth}
        \centering
        \includegraphics[width=\linewidth]{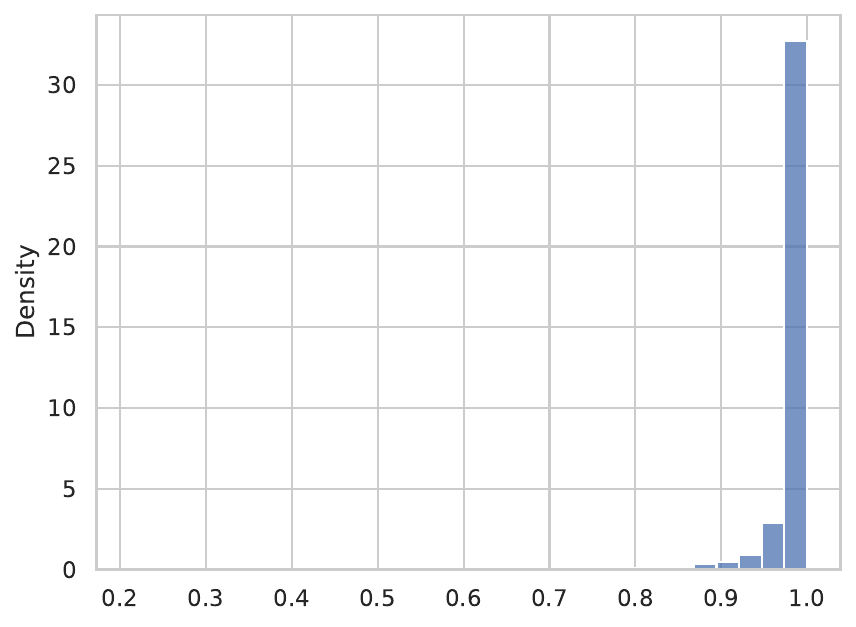}
        \caption{alignment of $\delta_1$, $\|\sum_i^3\delta_1^\top v_i\|/\|\delta_1\|$}
    \end{subfigure}
    \caption{A synthetic classification example. $\delta_1$ aligns with the top three eigenvectors of the FIM, sampled from 10{,}000 Gaussian noises $\varepsilon$.}
    \label{fig:cossim}
\end{figure}
 
Figure~\ref{fig:cossim} presents empirical results from this analysis. Panel (a) shows histograms of the absolute cosine similarities between $\delta_1$ (generated from 10{,}000 different $\varepsilon$ samples) and each of the top three eigenvectors. $\delta_1$ aligns more strongly with $v_1$, the eigenvector of the largest eigenvalue $\lambda_1$, than with $v_2$ or $v_3$. Panel (b) shows the squared norm of the projection of $\delta_1$ onto the top-3 eigenspace; the values concentrate near $1$, so $\delta_1$ vectors from different initializations remain largely within this principal subspace. These results support the theoretical expectation that the single-step perturbation $\delta_1$ aligns predominantly with the principal eigenspace of the FIM.
 
\paragraph{Accuracy of Algorithm~1 against the exact maximizer.}
The analysis above concerns a single step ($K=1$). We now verify that Algorithm~1 recovers the exact local inconsistency as $K$ grows. On the same 3-layer MLP and 2D Gaussian-mixture setup, we explicitly form $F\in\mathbb{R}^{5393\times5393}$ and compare three quantities: the theoretical value $\frac{\rho^2}{2}\lambda_{\max}(F)$ from a full eigendecomposition; the numerical maximum $S_\rho^\ast$ from an extensive Projected Gradient Ascent (20 multi-starts, 200 steps, initial step size $\rho/8$ with scheduled decay); and our estimate $\hat{S}_\rho$ from Algorithm~1. Algorithm~1 recovers $99.77\%$ of the exact maximum value (mean relative error $0.23\%$), and its perturbation direction $\delta_{\text{alg1}}$ aligns with the theoretical principal eigenvector $v_1$ at cosine similarity $0.9960\pm0.0004$.
 
To confirm this holds throughout training rather than only at convergence, we track all three quantities over the full optimization, recomputing the exact eigendecomposition every 10 steps. To check that Algorithm~1 behaves correctly as $K$ increases, we run this comparison with $K=10$. Figure~\ref{fig:exact_dynamics} shows that $\hat{S}_\rho$ tracks the theoretical maximum with negligible gap across training, indicating that the low approximation error is stable rather than incidental to a particular checkpoint.

 \begin{figure}[!htbp]
     \centering
     \includegraphics[width=0.6\linewidth]{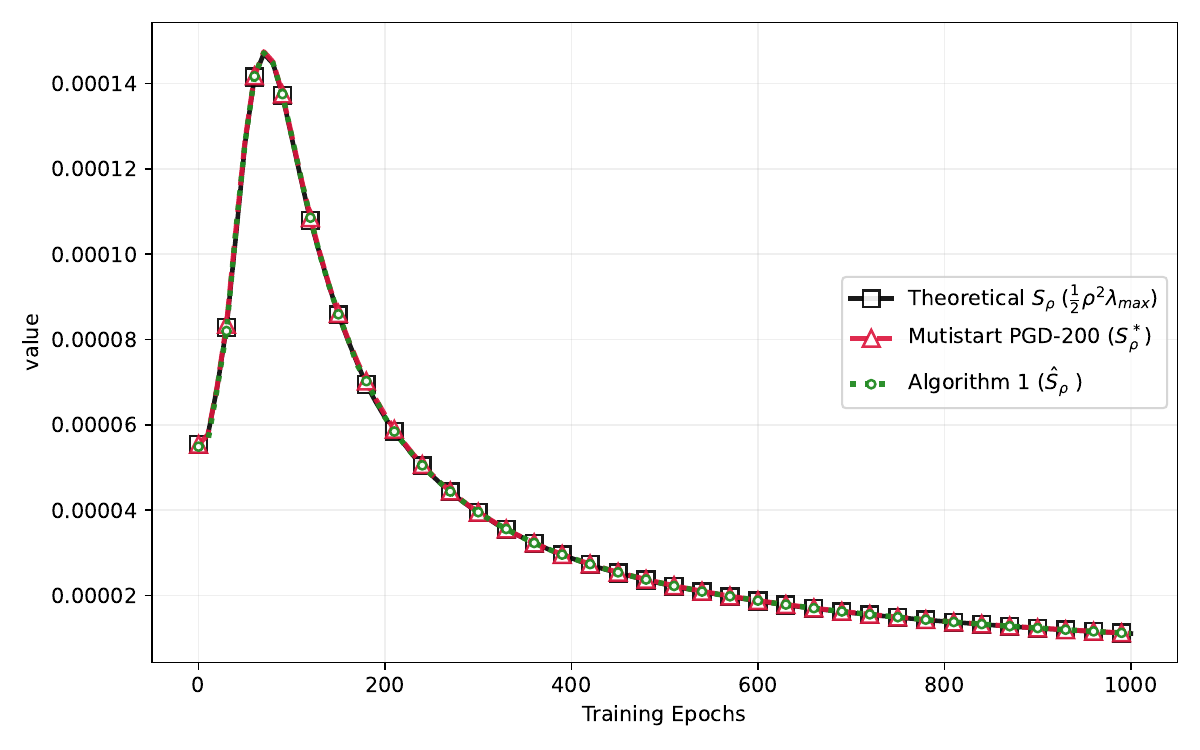}
     \caption{Optimization dynamics of the theoretical value $\frac{\rho^2}{2}\lambda_{\max}(F)$, the numerical exact maximum $S_\rho^\ast$, and the Algorithm~1 estimate $\hat{S}_\rho$ ($K=10$) over training. The three curves remain tightly overlapped throughout.}
    \label{fig:exact_dynamics}
 \end{figure}

\newpage
\section{Theoretical Justification for Detaching the Perturbation in IAM-D}
\label{app:theoretical_justification}

\paragraph{Takeaway.}
In IAM-D we optimize $L_{\text{IAM-D}}(\theta)=L(\theta)+\beta S_\rho(\theta)$.
Under the second-order (quadratic) approximation of the KL divergence,
the local inconsistency is well-approximated by the FIM spectral norm:
\begin{equation}
    \tilde S_\rho(\theta)
    := \max_{\|\delta\|\le \rho}\frac12\,\delta^\top F(\theta)\delta
    = \frac12\,\rho^2\,\lambda_{\max}(F(\theta)).
\end{equation}
Crucially, the exact gradient of $\tilde S_\rho(\theta)$ can be computed
\emph{without differentiating through the maximizer} $\delta^*(\theta)$.
Equivalently, applying a stop-gradient (detach) to $\delta^*$ yields the correct
outer gradient (envelope theorem / Hellmann--Feynman viewpoint).

\subsection{Quadratic surrogate and its maximizer}
Since $F(\theta)\succeq 0$, the maximizer of the Rayleigh quotient is attained at the
top eigenvector. Let $v_{\max}(\theta)$ be a unit-norm eigenvector associated with
$\lambda_{\max}(F(\theta))$ and define
\begin{equation}
    \delta^*(\theta) := \rho\, v_{\max}(\theta).
\end{equation}
Then $\tilde S_\rho(\theta)=\frac12(\delta^*)^\top F(\theta)\delta^*=\frac12\rho^2\lambda_{\max}(F(\theta))$.

\subsection{Stop-gradient computes the exact gradient of $\tilde S_\rho$}
We differentiate the quadratic form $\frac12(\delta^*)^\top F(\theta)\delta^*$ w.r.t. $\theta$.
For each coordinate $\theta_j$, the chain rule gives
\begin{equation}
\label{eq:chain_rule_delta}
    \frac{\partial}{\partial \theta_j}\tilde S_\rho(\theta)
    = \underbrace{\frac12(\delta^*)^\top \frac{\partial F(\theta)}{\partial \theta_j}\delta^*}_{\text{(A) curvature term}}
    \;+\;
    \underbrace{(\delta^*)^\top F(\theta)\frac{\partial \delta^*}{\partial \theta_j}}_{\text{(B) perturbation term}}.
\end{equation}
Term (A) is exactly what is computed when we stop gradients through $\delta^*$.

To show that term (B) vanishes for the exact maximizer, note that
$F(\theta)\delta^*=\lambda_{\max}(F(\theta))\delta^*$, hence
\begin{equation}
    (\delta^*)^\top F(\theta)\frac{\partial \delta^*}{\partial \theta_j}
    = \lambda_{\max}(F(\theta))\,(\delta^*)^\top \frac{\partial \delta^*}{\partial \theta_j}.
\end{equation}
Moreover, $\|\delta^*(\theta)\|=\rho$ is constant by construction, so
\begin{equation}
    \frac{\partial}{\partial \theta_j}\|\delta^*(\theta)\|^2
    = \frac{\partial}{\partial \theta_j} \big((\delta^*)^\top \delta^*\big)
    = 2(\delta^*)^\top \frac{\partial \delta^*}{\partial \theta_j}
    = 0,
\end{equation}
which implies $(\delta^*)^\top \frac{\partial \delta^*}{\partial \theta_j}=0$ and therefore term (B) is zero.
Plugging back into \eqref{eq:chain_rule_delta},
\begin{equation}
\label{eq:stopgrad_exact}
    \nabla_\theta \tilde S_\rho(\theta)
    = \left.\nabla_\theta \Big(\frac12\,\delta^\top F(\theta)\delta\Big)\right|_{\delta=\delta^*(\theta)}
    = \frac12\,(\delta^*)^\top (\nabla_\theta F(\theta))\,\delta^*.
\end{equation}

Finally, when $\lambda_{\max}(F(\theta))$ is simple, the Hellmann--Feynman theorem gives
\begin{equation}
    \nabla_\theta \lambda_{\max}(F(\theta))
    = v_{\max}(\theta)^\top (\nabla_\theta F(\theta)) v_{\max}(\theta),
\end{equation}
and thus \eqref{eq:stopgrad_exact} can be equivalently written as
\begin{equation}
    \nabla_\theta \tilde S_\rho(\theta)
    = \frac12\,\rho^2 \,\nabla_\theta \lambda_{\max}(F(\theta)).
\end{equation}
(If $\lambda_{\max}$ has multiplicity $>1$, the same expression yields a valid subgradient for any
unit vector in the top eigenspace.)

\subsection{Implication for IAM-D implementation}
In practice, $\delta^*$ is approximated by $K$ steps of power iteration (Algorithm~1), producing $\delta_K$.
We implement IAM-D by \emph{stopping gradients through} $\delta_K$ when computing the outer gradient:
\begin{equation}
    g_{\mathrm{reg}}
    := \nabla_\theta \Big(\beta\,\widehat S_\rho(\theta;\,\mathrm{sg}(\delta_K))\Big),
\end{equation}
where $\mathrm{sg}(\cdot)$ denotes the stop-gradient operator.
When $\delta_K\approx \delta^*$, this matches the exact gradient of the quadratic surrogate $\tilde S_\rho(\theta)$,
while avoiding costly higher-order terms arising from differentiating through the inner maximization.

\newpage
\section{Extra experiments}
\label{sec:extra_experiments}

\subsection{Analysis of Local Inconsistency Estimation}
In this section, we analyze how the estimation of local inconsistency $S_\rho(\theta)$ is affected by approximation choices: the mini-batch size used for perturbation ($m$-sharpness) and the number of gradient ascent steps ($K$). Regarding $m$-sharpness, we follow the protocol introduced for SAM---computing \emph{independent} perturbations on disjoint sub-batches in parallel and \emph{averaging} the perturbed gradients for the update---and replicate this scheme for IAM-S. $m$ indicates the size of disjoint sub-batch. 

\label{app:K_ablation}
\begin{figure}[ht]
    \centering
    \includegraphics[width=0.5\linewidth]{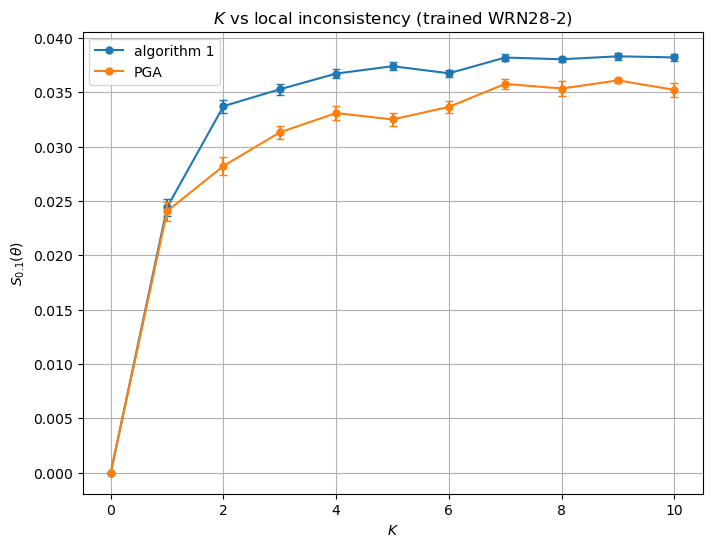}
    \caption{Estimated $S_\rho(\theta)$ with respect to $K$ on WRN28-2 (CIFAR-10) using Algorithm~\ref{alg:compute_s_rho} vs. Projected Gradient Ascent. Algorithm~\ref{alg:compute_s_rho} with $K=3$ offers a sufficient approximation of the true maximizer.}
    \label{fig:li_k}
\end{figure}

We investigate the impact of the number of steps $K$ used in Algorithm~\ref{alg:compute_s_rho} on model performance. From the perspective of estimating $S_\rho(\theta)$, increasing $K$ naturally yields a more accurate approximation of the worst-case perturbation $\delta^*$ and, consequently, a tighter lower bound on the local inconsistency, as illustrated in Figure~\ref{fig:li_k}. This suggests that a more precise estimation of $S_\rho(\theta)$ (i.e., using $K > 1$) during training may lead to better regularization and improved generalization.

To verify this, we conducted an ablation study on $K$ using IAM-D trained on CIFAR-10/100, following the standard hyperparameters described in Appendix~\ref{appendix:setting}. The results are summarized in Table~\ref{tab:iam_wrn28}.

\begin{table}[b]
\centering
\caption{Test error and training cost of IAM with respect to $K$ and $m$ (WRN28-10).}
\label{tab:iam_wrn28}
 \resizebox{\textwidth}{!}{
\begin{tabular}{c ccc ccc c}
\hline
\multirow{2}{*}{$K$} & \multicolumn{3}{c}{CIFAR-10} & \multicolumn{3}{c}{CIFAR-100} & \multirow{2}{*}{\shortstack{Running time\\(s/epoch)}} \\
\cline{2-7}
 & Standard & $m=32$ & $m=16$ & Standard & $m=32$ & $m=16$ & \\
\hline
1 & 3.28 \tiny{$\pm$ 0.06} & 3.05 \tiny{$\pm$ 0.02} & 3.03 \tiny{$\pm$ 0.02} & 17.16 \tiny{$\pm$ 0.03} & 16.92 \tiny{$\pm$ 0.04} & 16.58 \tiny{$\pm$ 0.05} & 239 (1.0$\times$) \\
2 & 3.03 \tiny{$\pm$ 0.02} & 2.85 \tiny{$\pm$ 0.04} & \textbf{2.80} \tiny{$\pm$ 0.02} & 16.92 \tiny{$\pm$ 0.04} & 16.08 \tiny{$\pm$ 0.08} & 15.45 \tiny{$\pm$ 0.09} & 311 (1.3$\times$) \\
3 & \textbf{2.99} \tiny{$\pm$ 0.04} & 2.86 \tiny{$\pm$ 0.01} & 2.91 \tiny{$\pm$ 0.02} & 16.90 \tiny{$\pm$ 0.03} & 15.89 \tiny{$\pm$ 0.01} & 15.34 \tiny{$\pm$ 0.05} & 378 (1.6$\times$) \\
5 & \textbf{2.98} \tiny{$\pm$ 0.03} & \textbf{2.80} \tiny{$\pm$ 0.08} & 2.87 \tiny{$\pm$ 0.01} & \textbf{16.62} \tiny{$\pm$ 0.02} & \textbf{15.78} \tiny{$\pm$ 0.18} & \textbf{15.26} \tiny{$\pm$ 0.01} & 525 (2.2$\times$) \\
\hline
\end{tabular}
}
\end{table}

As shown in Table~\ref{tab:iam_wrn28}, we observe a consistent improvement in generalization performance as $K$ increases. 
Specifically, increasing $K$ from 1 to 3 reduces the test error from 3.28\% to 2.99\%, although this comes at the cost of increased computational overhead. 

Notably, this finding stands in contrast to SAM, where increasing the number of inner maximization steps was reported to have no strong effect on test accuracy for CIFAR-10.
While SAM found that a single step was sufficient to
obtain a good approximation of the maximizer, our results indicate that for IAM, a more accurate estimation of the local inconsistency via multiple steps ($K>1$) provides tangible benefits to the final model performance.

\begin{figure}[!htbp]
    \centering
    \begin{subfigure}[b]{0.49\linewidth}
        \centering
        \includegraphics[width=\linewidth]{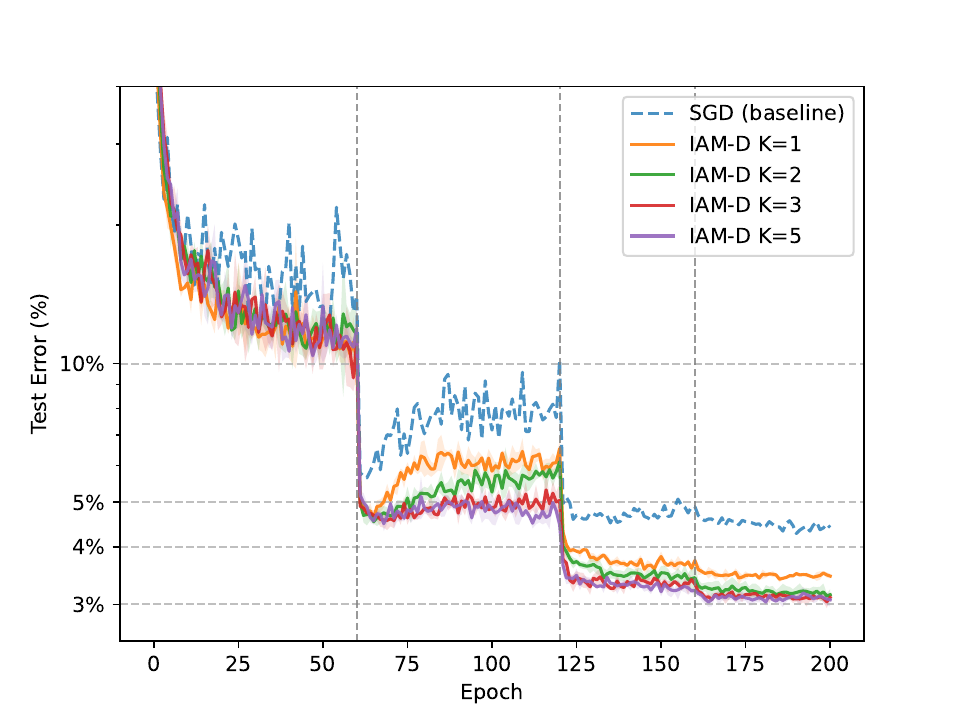}
        \caption{CIFAR-10, $m = 128$.}
    \end{subfigure}
    \hfill
    \begin{subfigure}[b]{0.49\linewidth}
        \centering
        \includegraphics[width=\linewidth]{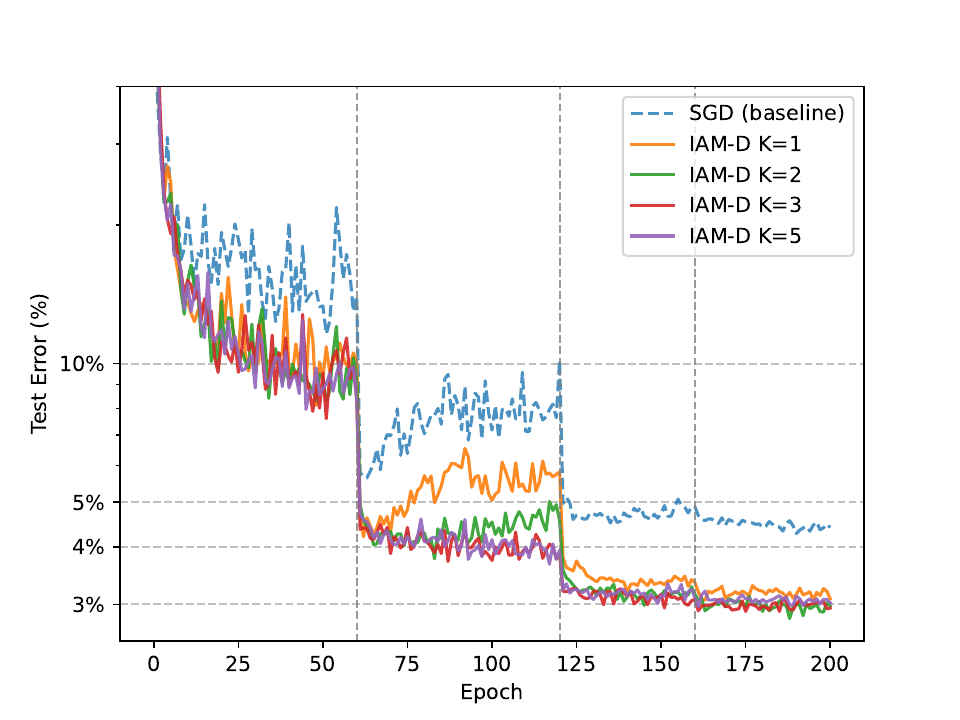}
        \caption{CIFAR-10, $m = 32$} 
    \end{subfigure}
    \begin{subfigure}[b]{0.49\linewidth}
        \centering
        \includegraphics[width=\linewidth]{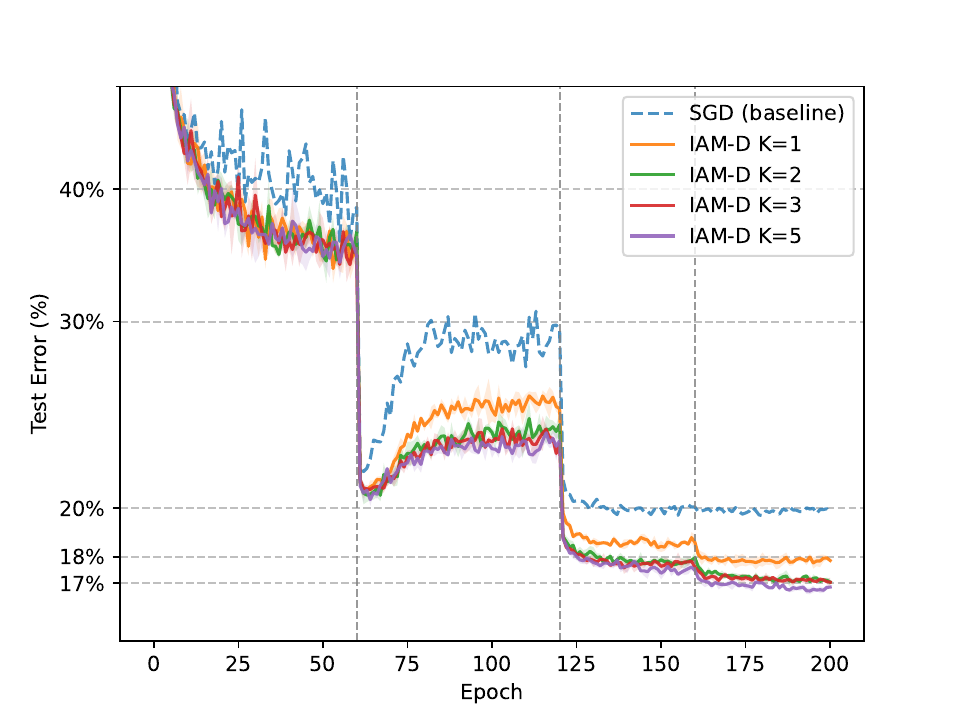}
        \caption{CIFAR-100, $m = 128$} 
    \end{subfigure}
        \hfill
    \begin{subfigure}[b]{0.49\linewidth}
        \centering
        \includegraphics[width=\linewidth]{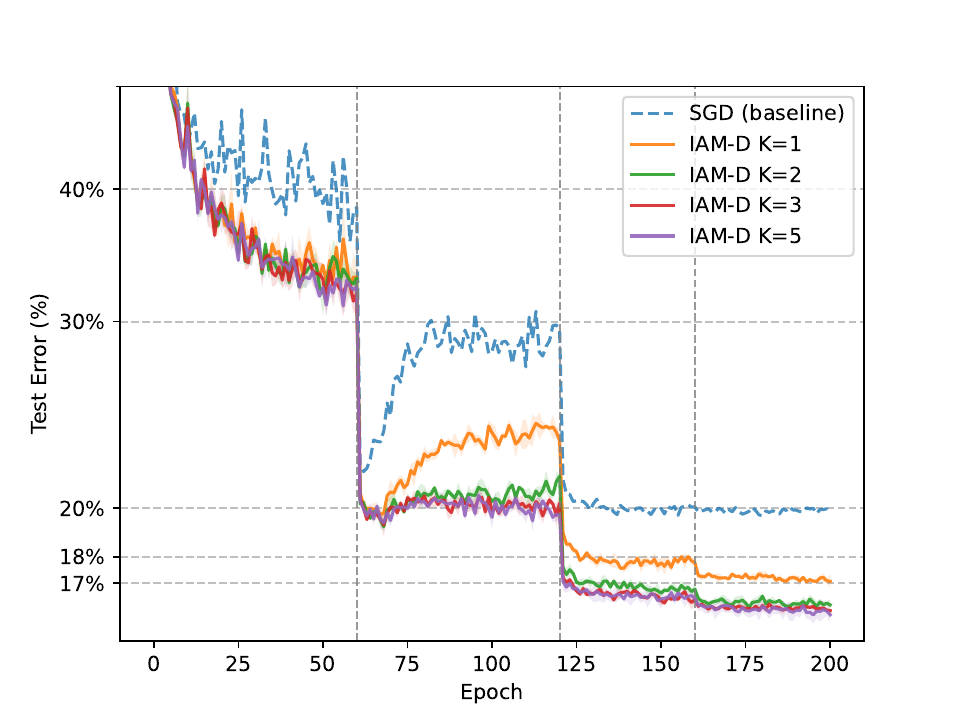}
        \caption{CIFAR-100, $m = 32$} 
    \end{subfigure}
    \caption{The evolution of test error (log-scale) with SGD and IAM-D according to different sub-batch size and K}
    \label{fig:CIFAR_m_K_IAM_D}
\end{figure}

\label{app:iam-s-msharpness}
\paragraph{m-sharpness in IAM: parallel per-sub-batch perturbations} On CIFAR-10 with a fixed total batch size $256$, we split each batch into  sub-batch size $m\in\{4,16,64,256\}$, compute perturbation $\delta$ and gradient $\nabla_\theta L(\theta+\delta)$ on each mini-batch and update with the mean of gradients.
We sweep $\rho\in\{0.005,\,0.01,\,0.02,\,0.05,\,0.1,\,0.2,\,0.5\}$ and repeat each $(m,\rho)$ condition three times with independent seeds.
All other training details (backbone, schedule, preprocessing) are identical to the main IAM-S experiments in the paper.

\begin{figure}[ht]
  \centering
  
  \includegraphics[width=.78\linewidth]{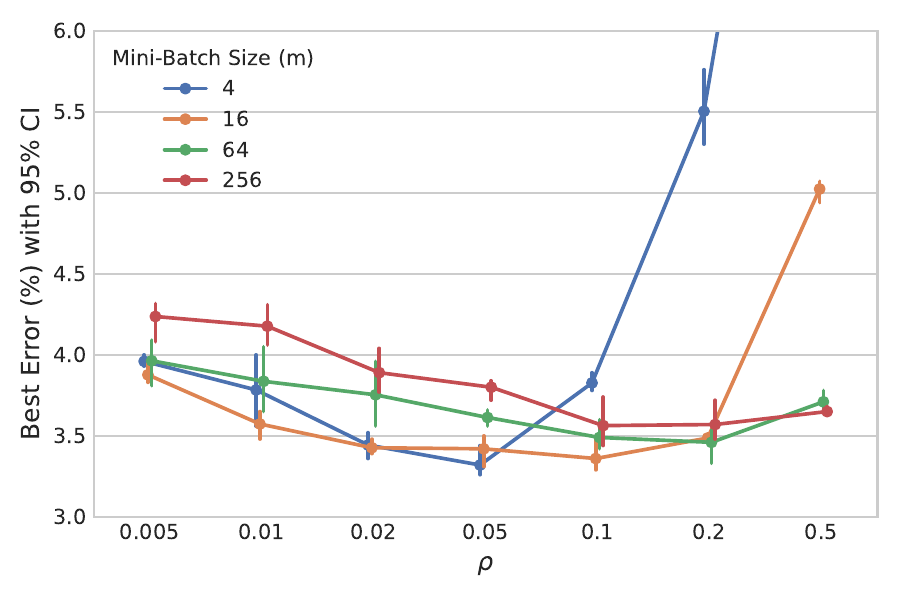}\hfill
  \caption{Test error as a function of $\rho$ for different values of $m$.}
  \label{fig:6}
\end{figure}

Figure~\ref{fig:6} shows that smaller values of $m$ tend to yield models
having better generalization ability as observed in \cite{foret2021SAM}.

\newpage
\subsection{Supervised learning}
\label{app:supervised}

\begin{table}[hbt]
\centering
\caption{Top-1 and Top-5 error (mean $\pm$ stderr) of ResNet-50 trained on ImageNet with different epoch settings.}
\label{tab:imagnet_full}
\begin{tabular}{llcccc}
\toprule
Epoch & Metric & SGD & SAM & IAM-D & IAM-S \\
\midrule
\multirow{2}{*}{100} & Top-1 & 23.27\,\tiny{$\pm$\,0.08} & 22.69\,\tiny{$\pm$\,0.13} & \textbf{22.39}\,\tiny{$\pm$\,0.12} & 22.84\,\tiny{$\pm$\,0.04} \\
                     & Top-5 & 6.72\,\tiny{$\pm$\,0.03}  & 6.41\,\tiny{$\pm$\,0.03}  & \textbf{6.24}\,\tiny{$\pm$\,0.03}  & 6.46\,\tiny{$\pm$\,0.06}  \\
\midrule
\multirow{2}{*}{200} & Top-1 & 22.66\,\tiny{$\pm$\,0.12} & 21.80\,\tiny{$\pm$\,0.12} & \textbf{21.36}\,\tiny{$\pm$\,0.06} & 21.72\,\tiny{$\pm$\,0.07} \\
                     & Top-5 & 6.51\,\tiny{$\pm$\,0.07}  & 5.99\,\tiny{$\pm$\,0.04}  & \textbf{5.70}\,\tiny{$\pm$\,0.02}  & 5.90\,\tiny{$\pm$\,0.02}  \\
\midrule
\multirow{2}{*}{400} & Top-1 & 22.80\,\tiny{$\pm$\,0.23} & 21.45$^*$ & \textbf{20.73}$^*$ & 20.95$^*$ \\
                     & Top-5 & 6.66\,\tiny{$\pm$\,0.06}  & 5.95  & \textbf{5.53}  & 5.61  \\
\bottomrule
\end{tabular}
\vspace{1mm} 
\begin{flushleft}
    \small \hspace{2cm} $^*$Results except for SGD at 400 epochs are from a single run.
\end{flushleft}
\end{table}

\begin{figure}[ht]
    \centering
    \begin{tikzpicture}
        \begin{axis}[
            width=0.8\linewidth,
            height=6.5cm,
            xlabel={Effective Training Epochs},
            ylabel={Top-1 Accuracy (\%)},
            grid=major,
            grid style={dashed, gray!30},
            legend pos=north west,
            legend style={nodes={scale=0.8, transform shape}},
            xmin=50, xmax=850,
            xtick={100, 200, 400, 800},
            xticklabels={100, 200, 400, 800},
            ymin=76.4, ymax=79.8, 
        ]

            % 1. SGD (Cost x1)
            % Data: (100, 76.73), (200, 77.34), (400, 77.20)
            \addplot[color=black, mark=*, thick, dashed] coordinates {
                (100, 76.73) (200, 77.34) (400, 77.20) (800, 76.88)
            };
            \addlegendentry{SGD}

            % 2. SAM (Cost x2)
            % Data: (200, 77.31), (400, 78.20), (800, 78.55)
            \addplot[color=blue, mark=square*, thick] coordinates {
                (200, 77.31) (400, 78.20) (800, 78.55)
            };
            \addlegendentry{SAM}

            % 3. IAM-D (Cost x2) - Best Performance
            % Data: (200, 77.61), (400, 78.64), (800, 79.27)
            \addplot[color=teal, mark=triangle*, thick] coordinates {
                (200, 77.61) (400, 78.64) (800, 79.27)
            };
            \addlegendentry{IAM-D}

            % 4. IAM-S (Cost x2)
            % Data: (200, 77.16), (400, 78.28), (800, 79.05)
            \addplot[color=red, mark=diamond*, thick] coordinates {
                (200, 77.16) (400, 78.28) (800, 79.05)
            };
            \addlegendentry{IAM-S}

        \end{axis}
    \end{tikzpicture}
    \caption{\textbf{Top-1 Accuracy vs. Effective Training Cost on ImageNet.} 
    Since SAM and IAM variants require two backward passes per step, their effective epochs are doubled compared to SGD. Results of SGD at 800 epochs are from a single run.
    \textbf{IAM-D} consistently achieves the best accuracy-efficiency trade-off across all training budgets.}
    \label{fig:imagenet_efficiency}
\end{figure}

Table~\ref{tab:imagnet_full} reports Top-1/Top-5 error of ResNet-50 on ImageNet 
across 100, 200, and 400 training epochs. Consistent with the observation of 
\citet{foret2021SAM} that SGD tends to overfit as training is extended from 200 
to 400 epochs while SAM continues to improve, we find that SGD's Top-1 error 
stagnates (and even slightly degrades) from 22.66\% at 200 epochs to 22.80\% at 
400 epochs, whereas SAM further reduces error to 21.45\%. Both IAM variants 
exhibit the same favorable scaling behavior as SAM: IAM-D improves from 21.36\% 
to 20.73\% and IAM-S from 21.72\% to 20.95\% as the training budget doubles. 
Notably, IAM-D consistently achieves the lowest error across all epoch settings, 
outperforming SAM by 0.30\%, 0.44\%, and 0.72\% at 100, 200, and 400 epochs, 
respectively. This widening gap suggests that the regularization induced by local 
inconsistency becomes increasingly beneficial under longer training, where 
overfitting pressure is stronger.

A natural concern with IAM and SAM-style methods is the additional computational 
cost introduced by the inner perturbation step, which requires an extra 
forward-backward pass per update. To assess whether the observed gains justify 
this overhead, we compare optimizers on equal footing in terms of 
\emph{effective training epochs}, defined as the number of gradient evaluations 
normalized by that of one SGD epoch. Under this metric, one epoch of SAM, IAM-D, 
or IAM-S costs twice as much as one epoch of SGD. Figure~\ref{fig:imagenet_efficiency} 
plots Top-1 accuracy against effective training epochs for all methods. Even at 
matched compute, IAM-D dominates the best accuracy at every compute budget: at 400 effective epochs, 
IAM-D attains 78.64\% accuracy, surpassing SGD trained for 800 effective epochs 
(76.88\%) by 1.76\% and SAM at the same budget (78.20\%) by 0.44\%. The gap 
persists at 800 effective epochs, where IAM-D reaches 79.27\% versus 78.55\% for 
SAM. IAM-S exhibits a similar advantage over SAM, while SGD's accuracy plateaus 
and eventually deteriorates due to overfitting. These results indicate that the 
extra gradient computation incurred by IAM is not merely a fixed-cost tax but 
yields a strictly better accuracy-compute trade-off than both vanilla SGD and 
SAM across the entire range of training budgets we examined.

\subsection{ViT}
\label{app:vit}
\begin{table}[h]
    \centering
    \caption{Test error $\pm$ stderr of SGD, SAM, and IAM-D when fine-tuning ViT-S/16 on CIFAR-10.}
    \label{tab:vit_cifar10_iamd}
    \begin{tabular}{lc}
        \hline
        Method & Test error \\ 
        \hline
        SGD   & 1.86 \tiny{$\pm$ 0.01} \\
        SAM   & 1.56 \tiny{$\pm$ 0.01} \\
        IAM-D & 1.52 \tiny{$\pm$ 0.02} \\
        \hline
    \end{tabular}
\end{table}

To demonstrate the versatility of IAM beyond CNN architectures, we conducted additional fine-tuning experiments on ViT-S/16 pre-trained on ImageNet-1K using the CIFAR-10 dataset. We compared IAM-D against SGD and SAM.

We fine-tuned the models for 10,000 steps with a batch size of 128, with base optimizer SGD. Gradient clipping with $\text{max norm} = 1.0$ is applied. The initial learning rate was set to 0.01 with a linear decay schedule after 500 warmup steps. For perturbation magnitude $\rho$, we used $\rho=0.05$ for SAM and $\rho=0.1$ for IAM-D.

\begin{table}[ht]
\centering
\caption{Test error (mean $\pm$ stderr) of ViT-T/4 trained 200 epochs on CIFAR-10 and CIFAR-100.}
\label{tab:cifar_comparison}
% \small
\setlength{\tabcolsep}{12pt} 
\begin{tabular}{lrr}
\toprule
Optimizer & \multicolumn{1}{c}{CIFAR-10} & \multicolumn{1}{c}{CIFAR-100} \\
\midrule
Adam-W       & 15.29 {\tiny$\pm$ 0.50}       & 43.73 {\tiny$\pm$ 0.19} \\
SAM       & 14.39 {\tiny$\pm$ 0.31}      & 42.45 {\tiny$\pm$ 0.35} \\
ASAM      & 14.60 {\tiny$\pm$ 0.25}      & 41.78 {\tiny$\pm$ 0.33} \\
\midrule
IAM\text{-}D & \textbf{13.93} {\tiny$\pm$ 0.17} & \textbf{40.83} {\tiny$\pm$ 0.42} \\
IAM\text{-}S & 14.37 {\tiny$\pm$ 0.19}   & 42.25 {\tiny$\pm$ 0.42} \\
\bottomrule
\end{tabular}
\end{table}
We also train ViT-Tiny/4 on CIFAR-\{10,100\}, with Adam-W as default optimizer. The initial learning rate was set to $0.003$ with a linear decay schedule after 3000 warm-up steps. Other settings are the same as main experiments.

\begin{table}[!h]
\centering
\caption{Top-1 and Top-5 error (mean $\pm$ stderr) of ViT-S/32 trained 300 epochs on ImageNet.}
\label{tab:imagnet_VitS}
\begin{tabular}{lcc}
\toprule
 & Top-1 & Top-5 \\
\midrule
Adam-W & 33.81\,\tiny{$\pm$\,0.07} & 13.96\,\tiny{$\pm$\,0.07} \\
SAM & 32.24\,\tiny{$\pm$\,0.17} & 12.84\,\tiny{$\pm$\,0.07} \\
IAM-D & 31.03\,\tiny{$\pm$\,0.10} & 11.95\,\tiny{$\pm$\,0.08} \\
IAM-S & \textbf{30.09}\,\tiny{$\pm$\,0.30} & \textbf{11.38}\,\tiny{$\pm$\,0.19} \\
\bottomrule
\end{tabular}
\end{table}
In addition, we train ViT-S/32 on ImageNet, with Adam-W as default optimizer. The initial learning rate was set to $0.0075$ with a linear decay schedule after 40000 warm-up steps. We use the same $\rho$ values for SAM, IAM-D and IAM-S as in ResNet-50 experiments.

IAM-D is consistently competitive with SAM on the transformer-based architecture, confirming that our proposed local inconsistency measure is effective across different model inductive biases.
\begin{table}[!h]
\centering
\caption{Test error (mean $\pm$ stderr) of MLP-Mixer-T/4 trained 200 epochs on CIFAR-10 and CIFAR-100.}
\label{tab:cifar_comparison_mlpmixer}
% \small
\setlength{\tabcolsep}{12pt} % 간격 조절
\begin{tabular}{lrr}
\toprule
Optimizer & \multicolumn{1}{c}{CIFAR-10} & \multicolumn{1}{c}{CIFAR-100} \\
\midrule
Adam-W         & 15.81 {\tiny$\pm$ 0.45}       & 43.11 {\tiny$\pm$ 0.26} \\
SAM         & 15.35 {\tiny$\pm$ 0.25}       & 41.26 {\tiny$\pm$ 0.60} \\
ASAM        & 16.01 {\tiny$\pm$ 0.31}       & 43.52 {\tiny$\pm$ 0.25} \\
\midrule
IAM\text{-}D & \textbf{14.63} {\tiny$\pm$ 0.40} & 41.51 {\tiny$\pm$ 0.32} \\
IAM\text{-}S & 15.03 {\tiny$\pm$ 0.26}       & \textbf{38.39} {\tiny$\pm$ 0.31} \\
\bottomrule
\end{tabular}
\end{table}

\newpage
\subsection{Semi-supervised learning}

\label{app:fixmatchSAM}

If we restrict SAM only to the \textbf{labeled} loss to avoid instability, we only minimize sharpness for the very small subset of labeled data (e.g., 250 samples). This fails to regularize the global landscape.
To confirm this, we ran ``FixMatch + SAM'' on CIFAR-10 (250 labels).

\begin{table}[!htbp]
    \centering
    \caption{Test Error (mean ± stderr) on CIFAR-10 with 250 labels using a WRN-28-2 model.}
    \label{tab:fix_match}
    \begin{tabular}{lc}
        \hline
        Method & Test error \\ 
        \hline
        FixMatch & 6.26 \tiny{$\pm$ 0.39} \\
        FixMatch + SAM   & 9.90 \tiny{$\pm$ 0.74} \\
        FixMatch + IAM-D & \textbf{5.30} \tiny{$\pm$  0.08} \\
        \hline
    \end{tabular}
\end{table}

This is significantly worse than FixMatch + SGD (6.26 \%) and FixMatch + IAM-D (5.30 \%). This failure case underscores the strength of IAM-D: it calculates inconsistency on unlabeled data without relying on potentially incorrect pseudo-labels, making it naturally superior for SSL.

\subsection{Self-supervised learning}

We extend the SSL experiment in Section \ref{sec:self} to CIFAR-100 using the same SimCLR + IAM-D setup described in Appendix, with the encoder evaluated via linear probing.
We use $\beta = 1.0, \rho = 0.1$, identical to the CIFAR-10 SSL setting.
\begin{figure}[htb]
  \centering
  \includegraphics[width=0.45\linewidth]{fig/SimCLR_cifar_10.pdf}
  \includegraphics[width=0.45\linewidth]{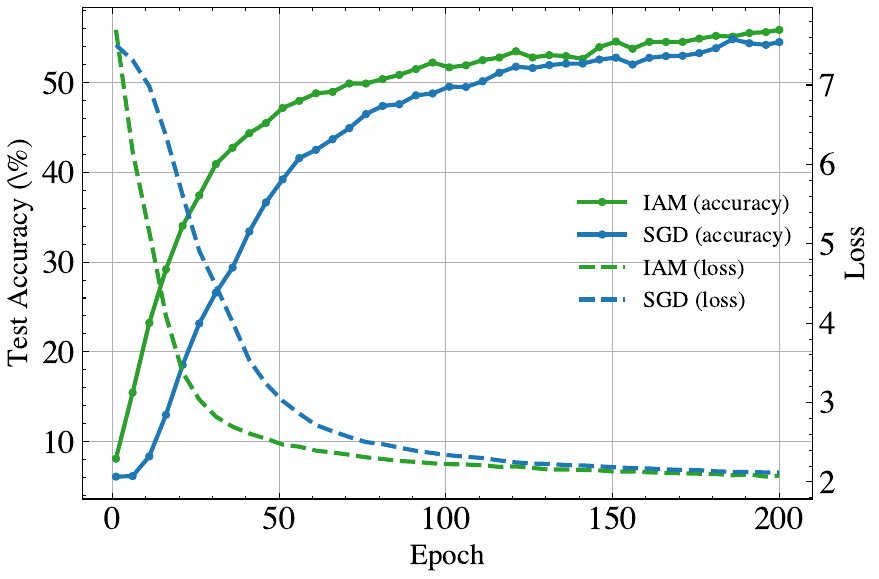}
  \caption{Test accuracy on linear probe and SimCLR training loss for ResNet-18 on CIFAR-\{10, 100\}, comparing SimCLR trained with SGD (SimCLR-SGD) versus SimCLR with IAM-D (SimCLR-IAM).}
  \label{fig:self_supervised_total}
\end{figure}
As shown in Figure \ref{fig:self_supervised_total}, SimCLR-IAM consistently achieves higher linear probe accuracy than SimCLR-SGD on CIFAR-100, mirroring the trend observed on CIFAR-10. This confirms that controlling local inconsistency benefits self-supervised representation learning regardless of the number of classes.

\newpage
\section{Experimental Details}
\label{appendix:setting}

\paragraph{Practical Considerations in estimating $S_\rho(\theta)$}
\begin{itemize}
    \item \textbf{Computational Efficiency:} Calculating the FIM explicitly and performing eigenvalue decomposition is computationally expensive ($O(m^2)$ or worse, where $m$ is the number of parameters). Algorithm 1 avoids this by requiring only $K$ gradient computations (forward and backward passes) per estimation, making its computational cost approximately $O(mK)$, which is significantly more feasible for large networks.
    \item \textbf{Number of Steps (K):} Empirical studies on neural network Hessians and FIMs suggest that the eigenspectrum is often dominated by a large dominant eigenvalue. Thus, the Power Iteration method can converge quickly to the dominant eigenvector. In practice, using a small number of steps, often just $K=3$, is found to be sufficient to get a reasonable estimate of the maximizing direction. This makes the computation highly efficient. 
    \item \textbf{Averaging to reduce variance from initialization:} The estimate of $S_\rho(\theta)$ obtained from Algorithm 1 depends on the random initialization $\delta_0$ when $K=1$. To obtain a more stable estimate, we compute the metric multiple times (e.g., 10 times) with different random initializations for $\delta_0$ and report the average value: $\mathbb{E}_{\delta_0}[\text{Estimate from Alg 1}]$. 
\end{itemize}

\paragraph{Infrastructure} Experiments are implemented in PyTorch 2.5.1 and executed on NVIDIA A40, A100 and L4 GPUs.

\subsection{Experimental details for Figure~1 (Section~4.6)}
\label{subsec:exp-fig1}

We trained 6CNN and WRN28-2 using SGD to investigate the relationship between generalization gap and local inconsistency. For 6CNN, each hyperparameter combination was run with 5 independent random seeds to assess variability. $\operatorname{Tr}(H)$, $\lambda_{\max}(H)$ and $S_{\rho}$ were computed on a 5{,}000-sample unlabeled held-out set.

\begin{table}[h]
\centering
\caption{Hyperparameters used for 6CNN and WRN28-2 on CIFAR-10.}
\label{tab:hp-fig1}
\begin{tabular}{lcc}
\hline
\textbf{Hyperparameter} & \textbf{6CNN} & \textbf{WRN28-2} \\
\hline
Dataset & CIFAR-10 & CIFAR-10 \\
Training data size & 45K & 45K \\
Initial learning rate & \{0.001, 0.002, 0.005, 0.01, 0.02, 0.05\} & \{0.1, 0.03, 0.01\} \\
Batch size & \{32, 64, 128, 256, 512\} & \{32, 64, 128, 256, 512\} \\
Weight decay & \{0.0, $10^{-4}$, $5\times 10^{-4}$, $10^{-3}$\} & \{0.0, $10^{-4}$, $5\times 10^{-4}$\} \\
Learning rate scheduling & constant & \{cosine annealing, multi-step\} \\
Data augmentation & False & \{True, False\} \\
Label smoothing & -- & -- \\
Epochs & until convergence ($<400$) & \{150, 200, 300\} \\
$K$ & 3 & 1 \\
\hline
\end{tabular}
\end{table}

\subsection{Image classification}
Each reported metric is the mean $\pm$~standard error computed over minimum test error from three independent runs.
\paragraph{Dataset.} We evaluate on \textbf{CIFAR-10} (50,000 training, 10,000 test images), \textbf{CIFAR-100} (50,000 training, 10,000 test images), Fashion-MNIST, and SVHN (no additional datasets). CIFAR-10, CIFAR-100, and SVHN are resized to $32\times32$ and preprocessed with RandomCrop($32$, padding$=4$). Fashion-MNIST is preprocessed with RandomCrop($28$, padding$=4$). Below are applied augmentations in common:
\begin{itemize}
\item \textit{RandomHorizontalFlip}($p=0.5$), and
\item \textit{Normalization} using the official mean and standard deviation.
\end{itemize}
No additional augmentation such as Cutout or Mixup is applied.

\paragraph{Optimization.} The models are trained for \textbf{200 epochs} with mini-batch size \textbf{128}. We use SGD with momentum 0.9, weight decay~$5\times10^{-4}$ as an optimizer, and a multistep learning rate schedule that decays the initial rate 0.1 (0.01 for SVHN) by 0.2 at epochs 60, 120, and 160. {We report the best score achieved by each SGD training run across either the standard epochs or the doubled epochs.}

\paragraph{Hyperparameters.} For image classification tasks, $\beta, \rho $ are tuned via grid search over $\beta \in \{0.1, 1.0, 5.0, 10.0, 20.0\}, \rho \in \{0.01, 0.05, 0.1, 0.5, 1.0\}$ with validation split using 10\% of the training dataset.
As seen in Figure~\ref{fig:heatmap}, the best pairs are $(1.0,0.1)$ for CIFAR-10 and $(10.0,0.1)$ for CIFAR-100. For both datasets, $\beta$ and $\rho$ had a trade-off relation.

\begin{figure}[!htbp]
    \centering
    \begin{subfigure}[b]{0.48\linewidth}
        \centering
        \includegraphics[width=\linewidth]{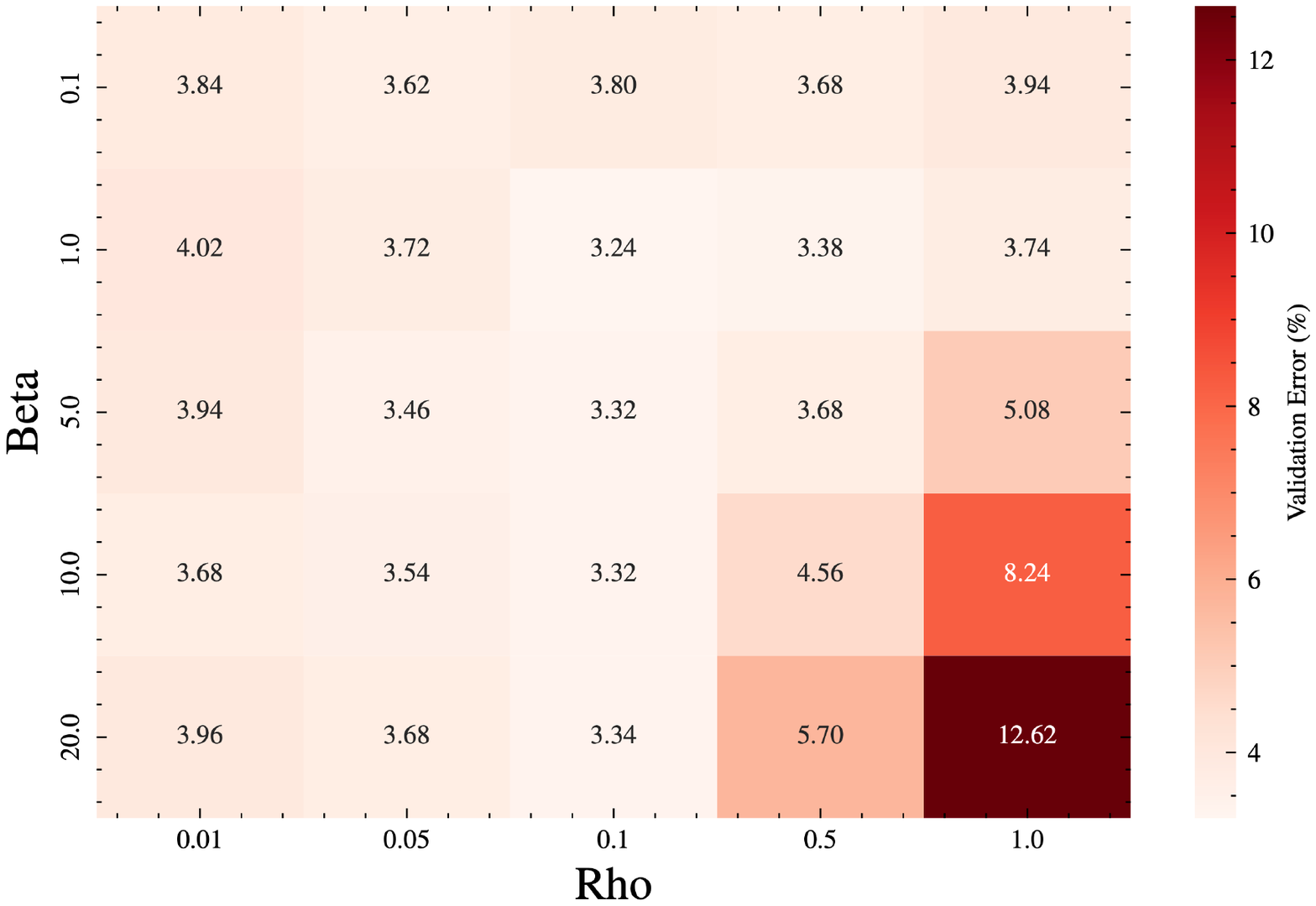}
        \caption{CIFAR-10 Heatmap}
    \end{subfigure}
    \hfill
    \begin{subfigure}[b]{0.48\linewidth}
        \centering
        \includegraphics[width=\linewidth]{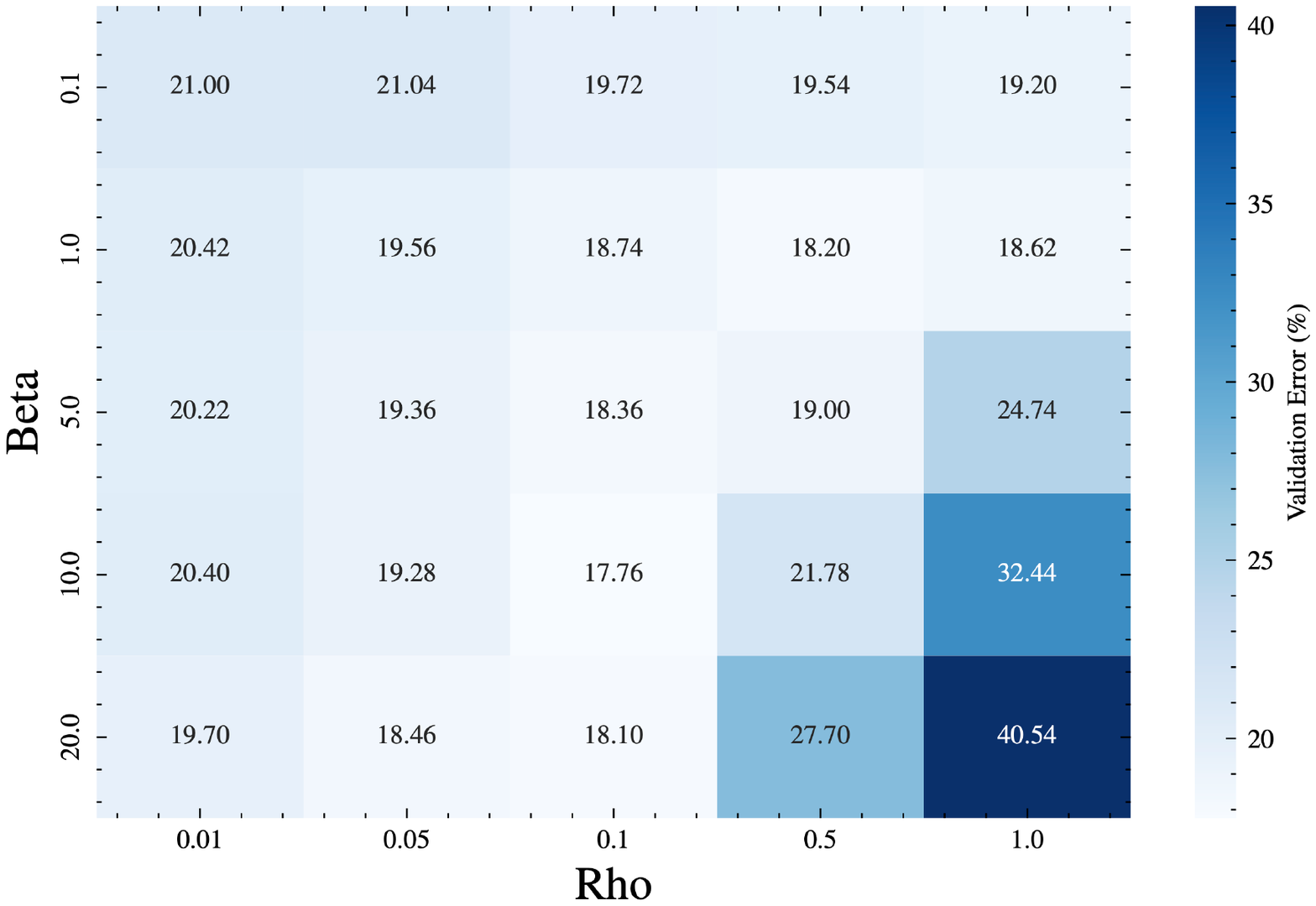}
        \caption{CIFAR-100 Heatmap} 
    \end{subfigure}
    \caption{Test error heatmap of IAM-D.}\label{fig:heatmap}
\end{figure}

\paragraph{Loss function.} Cross-entropy with label smoothing $(\alpha=0.1)$ is used for all methods.

\subsection{Semi-supervised learning}
In semi-supervised learning experiment, we shared most of the settings with image classification. Each reported metric is computed over minimum test error from three independent runs. Experiments with FixMatch are stated in a separate section.

\paragraph{Optimization.} Models are trained for \textbf{200 epochs} without learning rate scheduling.

\paragraph{Hyperparameters.} We used $\beta=1.0$ and $\rho=0.1$ for CIFAR-10 {and $\beta=10.0$ and $\rho=0.1$ for CIFAR-100}. SAM is also trained with $\rho=0.1$. The batch size $128$ is used for labeled data and $384$ for unlabeled data.

\paragraph{FixMatch.} We followed the reported FixMatch settings. WRN-28-2 for CIFAR-10, {WRN-28-8 for CIFAR-100} are trained for \textbf{$2^{20}$ iterations} with SGD as the base optimizer using the learning rate $0.03$, momentum 0.9, weight decay $5e-4$, with cosine learning rate scheduling. For IAM-D, $\rho = 0.01, \beta = 1.0$ is applied for CIFAR-10 and $\rho = 0.05, \beta = 1.0$ is applied for CIFAR-100. The batch size for the labeled data was $64$, and for unlabeled data was $448$. We applied EMA with decay $0.99$.

\subsection{Self-supervised learning}
Each reported metric is the mean \textbf{test accuracy} obtained from three independent runs.

\paragraph{Dataset.} We use the CIFAR‑10, CIFAR-100 benchmark. All images are resized to $32{\times}32$ and augmented with the SimCLR\citep{chen2020simpleframeworkcontrastivelearning} pipeline:
\begin{itemize}
  \item \textit{RandomResizedCrop}$(32,\;{\rm scale}{=}(0.4,1.0))$,
  \item \textit{RandomHorizontalFlip}($p=0.5$),
  \item \textit{ColorJitter}$(0.4,0.4,0.2,0.1)$ with probability $0.8$,
  \item \textit{RandomGrayscale}$(p{=}0.2)$, and
  \item \textit{Normalization} using the official mean and standard deviation.
\end{itemize}

\paragraph{Encoder \& Projection Head.} We adopt a \textbf{ResNet‑18} backbone with the first convolution modified to $3{\times}3$ layer with stride $=1$ and the max‑pool removed. The projector is a two‑layer MLP (hidden size $512$, output size $128$) with ReLU activation.

\paragraph{Optimization.} Models are trained for \textbf{200 epochs} with mini‑batch size \textbf{1024}. We use SGD (momentum 0.9, weight decay $1{\times}10^{-4}$) and a cosine‑annealing learning‑rate schedule starting at $1.0$ after a 10‑epoch warm‑up.

\paragraph{Contrastive Loss.} The NT‑Xent loss is computed with temperature $\tau = 0.5$.

\paragraph{IAM Hyperparameters.} We set the inconsistency weight $\beta{=}1.0$, neighborhood radius $\rho{=}0.1$, and noise‑scale $3.0$ (Gaussian initialization). The local inconsistency is computed between projection head outputs with temperature $\tau{=}0.5$.

\paragraph{Stability Heuristics.} The stability heuristics are identical to those used in the image classification setting.

\paragraph{Linear Evaluation.} After every 5 epochs (and at the final epoch), a frozen encoder is evaluated via a linear probe trained for 20 epochs with AdamW optimizer on the full training set (batch size 1024). The reported metric is the probe’s test accuracy.

%%%%%%%%%%%%%%%%%%%%%%%%%%%%%%%%%%%%%%%%%%%%%%%%%%%%%%%%%%%%%%%%%%%%%%%%%%%%%%%
%%%%%%%%%%%%%%%%%%%%%%%%%%%%%%%%%%%%%%%%%%%%%%%%%%%%%%%%%%%%%%%%%%%%%%%%%%%%%%%

\end{document}